\DeclareMathAlphabet{\pazocal}{OMS}{zplm}{m}{n}
\DeclareMathAlphabet\mathbfcal{OMS}{cmsy}{b}{n}
\newtheorem{theorem}{Theorem}
\newtheorem{lemma}{Lemma}
\providecommand{\nor}[1]{\left\lVert {#1} \right\rVert}
\providecommand{\scalT}[2]{\left\langle{#1},{#2}\right\rangle}
\def\bit{\begin{itemize}}
\def\eit{\end{itemize}}
\def\ben{\begin{enumerate}}
\def\een{\end{enumerate}}
\definecolor{dkgreen}{rgb}{0,0.6,0}
\definecolor{gray}{rgb}{0.5,0.5,0.5}
\definecolor{mauve}{rgb}{0.58,0,0.82}
\tiny\color{gray},
\title{\bf{Sobolev GAN}}
\author{Youssef Mroueh$^{\dag}$, Chun-Liang Li$^{\circ,\star}$, Tom Sercu$^{\dag,\star}$, Anant Raj$^{\Diamond,\star}$ \& Yu Cheng$^{\dag}$  \\
$\dag$ IBM Research AI\\
$\circ$ Carnegie Mellon University \\
$\Diamond$ Max Planck Institute for Intelligent Systems\\
$\star$ denotes Equal Contribution
\footnote{\small{\texttt{\{mroueh,chengyu\}@us.ibm.com, chunlial@cs.cmu.edu, tom.sercu1@ibm.com,
anant.raj@tuebingen.mpg.de}}}
}
\date{\vspace{-5ex}}
\begin{document}

\maketitle

\begin{abstract} 
We propose a new Integral Probability Metric (IPM) between distributions: the Sobolev IPM.
The Sobolev IPM compares the mean discrepancy of two distributions for functions (critic) restricted to a Sobolev ball defined with respect to a dominant measure $\mu$.
We show that the Sobolev IPM compares two distributions in high dimensions based on weighted conditional Cumulative Distribution Functions (CDF) of each coordinate on a leave one out basis.
The Dominant measure $\mu$ plays a crucial role as it defines the support on which conditional CDFs are compared.
Sobolev IPM can be seen as an extension of the one dimensional Von-Mises Cram\'er statistics to high dimensional distributions.
We show how Sobolev IPM can be used to train Generative Adversarial Networks (GANs). We then exploit the intrinsic conditioning implied by Sobolev IPM in text generation.
Finally we show that a variant of Sobolev GAN achieves competitive results in semi-supervised learning on CIFAR-10,
thanks to the smoothness enforced on the critic by Sobolev GAN which relates to Laplacian regularization.
\end{abstract}

\section{Introduction}
In order to learn Generative Adversarial Networks \citep{goodfellow2014generative}, it is now well established that the generator should mimic the distribution of real data, in the sense of a certain discrepancy measure. Discrepancies between distributions that measure the goodness of the fit of the neural generator to the real data distribution has been the subject of many recent studies \citep{arjovsky2017towards,nowozin2016f,kaae2016amortised,mao2016least,arjovsky2017wasserstein,gulrajani2017improved,mroueh2017mcgan,mroueh2017fisher,LiCCYP17},
most of which focus on training stability.

In terms of data modalities, most success was booked in plausible natural image generation after the introduction of
Deep Convolutional Generative Adversarial Networks (DCGAN) \citep{radford2015unsupervised}.
This success is not only due to advances in training generative adversarial networks in terms of loss functions \citep{arjovsky2017wasserstein} and stable algorithms,
but also to the representation power of convolutional neural networks in modeling images and in finding sufficient statistics that capture the \emph{continuous} density function of natural images.
When moving to neural generators of \emph{discrete sequences} generative adversarial networks theory and practice are still not very well understood.
Maximum likelihood pre-training or augmentation, in conjunction with the use of reinforcement learning techniques were proposed in many recent works for training GAN for discrete sequences generation \citep{yu2016seqgan,maligan,HjelmJCCB17,rajeswar2017adversarial}.
Other methods included using the Gumbel Softmax trick \citep{Kusner2016GANSFS} and the use of auto-encoders to generate adversarially discrete sequences from a continuous space \citep{ZhaoKZRL17}.
\emph{End to end} training of GANs for discrete sequence generation is still an open problem \citep{press2017language}.
Empirical successes of end to end training have been reported within the framework of WGAN-GP \citep{gulrajani2017improved}, using a proxy for the Wasserstein distance via a \emph{pointwise gradient penalty} on the critic.
Inspired by this success, we propose in this paper a new Integral Probability Metric (IPM) between distributions that we coin \emph{Sobolev IPM}.
Intuitively an IPM \citep{muller1997integral} between two probability distributions looks for a witness function $f$, called critic, that maximally discriminates between samples coming from the two distributions:
$$\sup_{f\in \mathcal{F}} \mathbb{E}_{x\sim \mathbb{P}}f(x)- \mathbb{E}_{x\sim \mathbb{Q}}f(x). $$
Traditionally, the function $f$ is defined over a function class $\mathcal{F}$ that is independent to the distributions at hand \citep{IPMemp}.
The Wasserstein-$1$ distance corresponds for instance to an IPM where the witness functions are defined over the space of Lipschitz functions;
The MMD distance \citep{gretton2012kernel} corresponds to witness functions defined over a ball in a Reproducing Kernel Hilbert Space (RKHS).

We will revisit in this paper Fisher IPM defined in \citep{mroueh2017fisher}, which extends the IPM definition to function classes defined with norms that depend on the distributions.
Fisher IPM can be seen as restricting the \emph{critic} to a Lebsegue ball defined with respect to a dominant measure $\mu$. The Lebsegue norm is defined as follows: 
$$\int_{\pazocal{X}} f^2(x) \mu(x)dx.$$ 
where $\mu$ is a dominant measure of $\mathbb{P}$ and $\mathbb{Q}$. 

In this paper we extend the IPM framework to critics bounded in the Sobolev norm:
$$\int_{\pazocal{X}} \nor{\nabla_x f(x)}^2_2 \mu(x) dx, $$
In contrast to Fisher IPM, which compares joint \emph{probability density functions} of all coordinates between two distributions,
we will show that Sobolev IPM compares \emph{weighted (coordinate-wise) conditional Cumulative Distribution Functions} for all coordinates on a leave on out basis.
Matching conditional dependencies between coordinates is crucial for \textbf{sequence modeling}.

Our analysis and empirical verification show that the modeling of the conditional dependencies can be built in to the metric used to learn GANs as in Sobolev IPM. For instance, this gives an advantage to Sobolev IPM in comparing sequences over Fisher IPM. 
Nevertheless, in sequence modeling when we parametrize the critic and the generator with a neural network, we find an interesting tradeoff between the metric used and the architectures used to parametrize the critic and the generator as well as the conditioning used in the generator. The burden of modeling the conditional long term dependencies can be handled by the IPM loss function as in Sobolev IPM (more accurately the choice of the data dependent function class of the critic) or by a simpler metric such as Fisher IPM together with a powerful architecture for the critic that models conditional long term dependencies such as LSTM or GRUs in conjunction with a curriculum conditioning of the generator as done in \citep{press2017language}.
Highlighting those interesting tradeoffs between metrics, data dependent functions classes for the critic (Fisher or Sobolev) and architectures is crucial to advance sequence modeling and more broadly structured data generation using GANs. 

On the other hand, Sobolev norms have been widely used in manifold regularization in the so called Laplacian framework for semi-supervised learning (SSL) \citep{Belkin2006}.
GANs have shown success in semi-supervised learning \citep{salimans2016improved,dumoulin2016adversarially,dai2017good,kumar2017improved}.
Nevertheless, many normalizations and additional tricks were needed. We show in this paper that a variant of Sobolev GAN achieves strong results in semi-supervised learning on CIFAR-10,
without the need of any activation normalization in the critic.

The main contributions of this paper can be summarized as follows:
\begin{enumerate}
\item We overview in Section \ref{sec:overview} different metrics between distribution used in the GAN literature. We then generalize Fisher IPM in Section \ref{sec:fisheripm} with a general dominant measure $\mu$ and show how it compares distributions based on their PDFs. 
\item We introduce Sobolev IPM in Section \ref{sec:SobolevIPM} by restricting the critic of an IPM to a Sobolev ball defined with respect to a dominant measure $\mu$.
  We then show that Sobolev IPM defines a discrepancy between weighted (coordinate-wise) conditional CDFs of distributions.
\item The intrinsic conditioning and the CDF matching make Sobolev IPM suitable for discrete sequence matching and explain the success of the gradient pernalty in WGAN-GP and Sobolev GAN in discrete sequence generation. 
\item We give in Section \ref{sec:Sobolevgan} an ALM (Augmented Lagrangian Multiplier) algorithm for training Sobolev GAN. Similar to Fisher GAN, this algorithm is stable and does not compromise the capacity of the critic.
\item We show in Appendix \ref{app:Theory} that the critic of Sobolev IPM satisfies an elliptic Partial Differential Equation (PDE).
  We relate this diffusion to the Fokker-Planck equation and show the behavior of the gradient of the optimal Sobolev critic as a transportation plan between distributions.
\item We empirically study Sobolev GAN in character level text generation (Section \ref{sec:SobolevText}).
  We validate that the conditioning implied by Sobolev GAN is crucial for the success and stability of GAN in text generation.
  As a take home message from this study, we see that text generation succeeds either by \emph{implicit conditioning} i.e using Sobolev GAN (or WGAN-GP) together with convolutional critics and generators, or by \emph{explicit conditioning} i.e using Fisher IPM together with recurrent critic and generator and curriculum learning.

\item We finally show in Section \ref{sec:Sobolevssl} that a variant of Sobolev GAN achieves competitive semi-supervised learning results on CIFAR-10, thanks to the smoothness implied by the Sobolev regularizer. 
\end{enumerate}

\section{Overview of Metrics between Distributions}\label{sec:overview}
In this Section, we review different representations of probability distributions and metrics for comparing distributions that use those representations. Those metrics are at the core of training GAN.
In what follows, we consider probability measures with a positive weakly differentiable probability density functions (PDF). Let ${P}$ and ${Q}$ be two probability measures with PDFs $\mathbb{P}(x)$ and $\mathbb{Q}(x)$ defined on $\pazocal{X}\subset \mathbb{R}^d$.
Let $F_{\mathbb{P}}$ and $F_{\mathbb{Q}}$ be the Cumulative Distribution Functions (CDF) of $\mathbb{P}$ and $\mathbb{Q}$ respectively:
$$F_{\mathbb{P}}(x)=\int_{-\infty}^{x_1}\dots \int_{-\infty}^{x_d} \mathbb{P}(x_1,\dots x_d) dx.$$

The score function of a density function is defined as: $s_{\mathbb{P}}(x)= \nabla_x \log(\mathbb{P}(x)) \in \mathbb{R}^d.$

\noindent In this work, we are interested in metrics between distributions that have a variational form and can be written as a suprema of mean discrepancies of functions defined on a specific function class.
This type of metrics include $\varphi$-divergences as well as Integral Probability Metrics \citep{Sriperumbudur2009OnIP} and have the following form: 
$$d_{\mathcal{F}}(\mathbb{P},\mathbb{Q})=\sup_{f\in \mathcal{F}} \left|\Delta(f;\mathbb{P},\mathbb{Q})\right|,$$
where $\mathcal{F}$ is a function class defined on $\pazocal{X}$ and $\Delta$ is a mean discrepancy, $\Delta:\mathcal{F}\to \mathbb{R}$. The variational form given above leads in certain cases to closed form expressions in terms of the PDFs $\mathbb{P},\mathbb{Q}$ or in terms of the CDFs $F_{\mathbb{P}},F_{\mathbb{Q}}$ or the score functions $s_{\mathbb{P}},s_{\mathbb{Q}}$.

In Table \ref{tab:compa}, we give a comparison of different discrepancies $\Delta$ and function spaces $\mathcal{F}$ used in the literature for GAN training together with our proposed Sobolev IPM.
We see from Table \ref{tab:compa} that Sobolev IPM, compared to Wasserstein Distance, imposes a tractable smoothness constraint on the critic on points sampled from a distribution $\mu$, rather then imposing a Lipschitz constraint on all points in the space $\pazocal{X}$.
We also see that Sobolev IPM is the natural generalization of the Cram\'er Von-Mises Distance from one dimension to high dimensions. We note that the Energy Distance, a form of Maximum Mean Discrepancy for a special kernel, was used in \citep{bellemare2017cramer} as a generalization of the Cram\'er distance in GAN training but still needed a gradient penalty in its algorithmic counterpart leading to a mis-specified distance between distributions.
Finally it is worth noting that when comparing Fisher IPM and Sobolev IPM we see that while Fisher IPM compares joint PDF of the distributions, Sobolev IPM compares weighted (coordinate-wise) conditional CDFs.
As we will see later, this conditioning nature of the metric makes Sobolev IPM suitable for comparing sequences.
Note that the Stein metric \citep{LiuLJ16,Stein} uses the score function to match distributions.
We will show later how Sobolev IPM relates to the Stein discrepancy (Appendix \ref{app:Theory}). 

\begin{table}[ht!] 
\begin{center}
\resizebox{\textwidth}{!}{\begin{tabular}{| l | l | l | l | l | }
\hline
 & ~~~~ &~~~~ &~~~~ \\
 &~~~~~~~~$\bm{\Delta(f;\mathbf{\mathbb{P}},\mathbb{Q})}$~~~ &~~~~~~~~~~~~~~~~~${\mathbfcal{F}}$ &~~~~~~~~~~~~$\bm{d_{\mathcal{F}}(\mathbb{P},\mathbb{Q})}$ \\
 & &~~~~~~ Function class &~~~~~~~~~Closed Form \\
\hline
 & ~~~~ &~~~~ &~~~~ \\

$\varphi$-Divergence & ~~$\mathbb{E}_{x\sim \mathbb{P}} f(x) - \mathbb{E}_{x\sim \mathbb{Q}}\varphi^*(f(x))$ & $\Big\{f:\pazocal{X}\to \mathbb{R}, f \in \rm{dom}_{\varphi^*}\Big\} $ & ~~$\mathbb{E}_{x\sim \mathbb{Q}}\left[\varphi(\frac{\mathbb{P}(x)}{\mathbb{Q}(x)})\right]$ \\
 \small{\citep{goodfellow2014generative}}& ~~~~ &~~~~ &~~~~ \\
\small{\citep{nowozin2016f}} & ~$\varphi^*$ Fenchel Conjugate &~~~~ &~~~~ \\
\hline
\hline
 & ~~~~ &~~~~ &~~~~ \\
 Wasserstein -1 & ~~ $\mathbb{E}_{x\sim \mathbb{P}} f(x) - \mathbb{E}_{x\sim \mathbb{Q}}f(x)$ & $\Big\{f:\pazocal{X}\to \mathbb{R}, \nor{f}_{\rm{lip}}\leq 1\Big \} $ &~~~~ ~~~~~NA \\
\small{\citep{arjovsky2017wasserstein}} & ~~~~ &~~~~ &~~~~ \\
\small{\citep{gulrajani2017improved}}& ~~~~ &~~~~ &~~~~ \\
 \hline 
 ~~~~ & ~~~~ &~~~~ &~~~~ \\
MMD &~~ $\mathbb{E}_{x\sim \mathbb{P}} f(x) - \mathbb{E}_{x\sim \mathbb{Q}}f(x)$ & $\Big\{	f:\pazocal{X}\to \mathbb{R} , \nor{f}_{\mathcal{H}_k} \leq 1\Big \} $ & ~~ $ \nor{\mathbb{E}_{x\sim \mathbb{P}}k_{x}-\mathbb{E}_{x\sim \mathbb{Q}}k_{x} }_{\mathcal{H}_{k}}$ \\
\small{\citep{LiCCYP17}} & ~~~~ &~~~~ &~~~~ \\
\small{\citep{mmdGAN1}}& ~~~~ &~~~~ &~~~~ \\
\small{\citep{mmdGAN2}}& ~~~~ &~~~~ &~~~~ \\

\hline
\hline
 & ~~~~ &~~~~ &~~~~ \\
Stein &~~~~ $\mathbb{E}_{x\sim \mathbb{Q}}\left[T(\mathbb{P}){f}(x)\right]$ & $\Big\{f:\pazocal{X} \to \mathbb{R}^d$ &~~~~~~NA in general $$ \\
 Distance & $T(\mathbb{P})= (\nabla_x \log(\mathbb{P}(x))^{\top} + \nabla_x. $ & ~~~ $f$ smooth with zero & ~~~~has a closed form \\
\small{\citep{WangL16f}} &~~~~ &~~~ boundary condition $\Big\}$ &~~~~~~ in RKHS \\
 \hline
 & ~~~~ &~~~~ &~~~~ \\

Cram\'er &~~ $\mathbb{E}_{x\sim \mathbb{P}} f(x) - \mathbb{E}_{x\sim \mathbb{Q}}f(x)$ & $\Big\{f: \pazocal{X} \to \mathbb{R},\mathbb{E}_{x\sim \mathbb{P}} (\frac{df(x)}{dx})^2 \leq 1,$ &~~$ \sqrt{\mathbb{E}_{x\sim \mathbb{P}} \left(\frac{F_{\mathbb{P} }(x) - F_{\mathbb{Q} }(x)}{\mathbb{P}(x)} \right)^2}$ \\ 
for $\bm{d=1}$& ~~~~ &~~~ {$f$ smooth with zero } &~~ $x \in \mathbb{R}$\\
\small{\citep{BellemareDDMLHM17}} & ~~~~ &~~ {boundary condition $\Big\}$} &~~~~ \\
\hline
\hline
 & ~~~~ & ~~ ~~~~~ &~~~~ \\
$\mu$-Fisher & ~~$\mathbb{E}_{x\sim \mathbb{P}} f(x) - \mathbb{E}_{x\sim \mathbb{Q}}f(x)$ & $\Big\{f: \pazocal{X} \to \mathbb{R}, f\in \mathcal{L}_{2}(\pazocal{X},\mu),$ &~~ $\sqrt{\mathbb{E}_{x\sim \mu}\left(\frac{\mathbb{P}(x)-\mathbb{Q}(x)}{\mu(x)}\right)^2 } $ \\
IPM & ~~~~ &~~~~ $\mathbb{E}_{x \sim \mu } f^2(x) \leq 1\Big\}$ &~~~~ \\
\small{\citep{mroueh2017fisher}} & ~~~~ &~~~~ &~~~~ \\
\hline 
 & ~~~~ &~~~~ &~~~~ \\
$\mu$-Sobolev &~~$\mathbb{E}_{x\sim \mathbb{P}} f(x) - \mathbb{E}_{x\sim \mathbb{Q}}f(x)$ & $\Big\{f: \pazocal{X} \to \mathbb{R}, f\in W^{1,2}_0(\pazocal{X},\mu),$ & $\frac{1}{d} \sqrt{\mathbb{E}_{x\sim \mu } \sum_{i=1}^d \left( \frac{\phi_i(\mathbb{P})-\phi_i(\mathbb{Q})}{\mu(x)} \right)^2}$ \\
 IPM & ~~~~ &~~$\mathbb{E}_{x \sim \mu } \nor{\nabla_x f(x)}^2 \leq 1,$ & \\
(This work) & ~~~~ &~~ with zero boundary condition $\Big\}$ & ~~~~~~~~ where $\textcolor{red}{\phi_i(\mathbb{P})=}$ \\
 & ~~~~ &~~ & $\textcolor{red}{{\mathbb{P}_{X^{-i}}(x^{-i})F_{\mathbb{P}_{[ X_i | X^{-i}=x^{-i} ]}}(x_i)}}$ \\
 & ~~~~ & ~~ ~~~~~ & $x^{-i}=(x_1,\dots x_{i-1},x_{i+1},\dots x_d)$ \\ 
\hline
\end{tabular}}

\caption{Comparison of different metrics between distributions used for GAN training. References are for papers using those metrics for GAN training.}
\label{tab:compa} 
\end{center}
\end{table}

\section{Generalizing Fisher IPM: PDF Comparison}\label{sec:fisheripm}
Imposing data-independent constraints on the function class in the IPM framework, such as the Lipschitz constraint in the Wasserstein distance is computationally challenging and intractable for the general case.
In this Section, we generalize the Fisher IPM introduced in \citep{mroueh2017fisher}, where the function class is relaxed to a \emph{tractable data dependent constraint} on the second order moment of the critic, in other words the critic is constrained to be in a Lebsegue ball. 

\noindent \textbf{Fisher IPM.} ~
Let $\pazocal{X}\subset \mathbb{R}^d$ and $\mathcal{P}(\pazocal{X})$ be the space of distributions defined on $\pazocal{X}$. Let $\mathbb{P},\mathbb{Q} \in \mathcal{P}(\pazocal{X})$, and $\mu$ be a dominant measure of $\mathbb{P}$ and $\mathbb{Q}$, in the sense that $$\mu(x)=0 \implies \mathbb{P}(x)=0 \text{ and } \mathbb{Q}(x)=0.$$
We assume $\mu$ to be also a distribution in $\mathcal{P}(\pazocal{X})$, and assume $\bm{\mu(x)>0}$, $\forall x \in \pazocal{X}$.
Let $\mathcal{L}_2(\pazocal{X},\mu)$ be the space of $\mu$-measurable functions. For $f,g \in \mathcal{L}_2(\pazocal{X},\mu)$, we define the following dot product and its corresponding norm:
$$\scalT{f}{g}_{\mathcal{L}_2(\pazocal{X},\mu)}=\int_{\pazocal{X}}f(x)g(x)\mu(x)dx,~~ \nor{f}_{\mathcal{L}_2(\pazocal{X},\mu)}=\sqrt{\int_{\pazocal{X}}f^2(x)\mu(x)dx}.$$
Note that $\mathcal{L}_2(\pazocal{X},\mu)$, can be formally defined as follows:
$$\mathcal{L}_2(\pazocal{X},\mu)=\{f:\pazocal{X}\to \mathbb{R} \text{ s.t } \nor{f}_{\mathcal{L}_2(\pazocal{X},\mu)} <\infty \}.$$
We define the unit Lebesgue ball as follows:
$$\mathbb{B}_2(\pazocal{X},\mu)=\{f \in \mathcal{L}_2(\pazocal{X},\mu), \nor{f}_{\mathcal{L}_2(\pazocal{X},\mu)}\leq 1 \}.$$
Fisher IPM defined in \citep{mroueh2017fisher}, searches for the critic function \emph{in the Lebesgue Ball $\mathbb{B}_2(\pazocal{X},\mu)$ } that maximizes the mean discrepancy between $\mathbb{P}$ and $\mathbb{Q}$.
Fisher GAN \citep{mroueh2017fisher} was originally formulated specifically for $\mu=\frac{1}{2}(\mathbb{P}+\mathbb{Q})$. 
We consider here a general $\mu$ as long as it dominates $\mathbb{P}$ and $\mathbb{Q}$. We define Generalized Fisher IPM as follows:
\begin{equation}
\mathcal{F}_{\mu}(\mathbb{P},\mathbb{Q})= \sup_{f \in \mathbb{B}_2(\pazocal{X},\mu)} \mathbb{E}_{x\sim \mathbb{P}}f(x) -\mathbb{E}_{x\sim \mathbb{Q}} f(x)
\end{equation}
Note that:
$$\mathbb{E}_{x\sim \mathbb{P}}f(x) -\mathbb{E}_{x\sim \mathbb{Q}} f(x)=\scalT{f}{\frac{\mathbb{P}-\mathbb{Q}}{\mu}}_{\mathcal{L}_2(\pazocal{X},\mu)}.$$
Hence Fisher IPM can be written as follows:
\begin{equation}
\mathcal{F}_{\mu}(\mathbb{P},\mathbb{Q})= \sup_{f \in \mathbb{B}_2(\pazocal{X},\mu)} \scalT{f}{\frac{\mathbb{P}-\mathbb{Q}}{\mu}}_{\mathcal{L}_2(\pazocal{X},\mu)}
\label{eq:fisherGeo}
\end{equation}
We have the following result:

\begin{theorem}[Generalized Fisher IPM]
The Fisher distance and the optimal critic are as follows:
\begin{enumerate}
\item The Fisher distance is given by:
 $$\mathcal{F}_{\mu}(\mathbb{P},\mathbb{Q})=\nor{\frac{\mathbb{P}-\mathbb{Q}}{\mu}}_{\mathcal{L}_2(\pazocal{X},\mu)}=\sqrt{\mathbb{E}_{x\sim \mu}\left(\frac{\mathbb{P}(x)-\mathbb{Q}(x)}{\mu(x)}\right)^2}.$$
\item The optimal $f_{\chi}$ achieving the Fisher distance ${\mathcal{F}_{\mu}(\mathbb{P},\mathbb{Q})}$ is:
$${f_{\chi}}=\frac{1}{\mathcal{F}(\mathbb{P},\mathbb{Q}) }\frac{\mathbb{P}-\mathbb{Q}}{\mu}, \bm{\mu} \text{ almost surely}.$$
\end{enumerate}
\label{theo:Fisher}
\end{theorem}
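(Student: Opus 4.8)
The plan is to recognize the optimization in \eqref{eq:fisherGeo} as the maximization of a bounded linear functional over the unit ball of the Hilbert space $\mathcal{L}_2(\pazocal{X},\mu)$, which is settled by the Cauchy--Schwarz inequality together with its equality case. First I would record the identity already noted before the statement,
$$\mathbb{E}_{x\sim \mathbb{P}}f(x) -\mathbb{E}_{x\sim \mathbb{Q}} f(x)=\scalT{f}{\tfrac{\mathbb{P}-\mathbb{Q}}{\mu}}_{\mathcal{L}_2(\pazocal{X},\mu)},$$
which follows by writing $\mathbb{E}_{x\sim \mathbb{P}}f(x)=\int_{\pazocal{X}}f(x)\tfrac{\mathbb{P}(x)}{\mu(x)}\mu(x)\,dx$ and likewise for $\mathbb{Q}$; this is legitimate since $\mu(x)>0$ on $\pazocal{X}$ and $\mu$ dominates $\mathbb{P}$ and $\mathbb{Q}$, so the ratios are well defined.

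Next, set $g:=\frac{\mathbb{P}-\mathbb{Q}}{\mu}$ and observe $g\in\mathcal{L}_2(\pazocal{X},\mu)$ precisely when the claimed quantity $\sqrt{\mathbb{E}_{x\sim\mu}\left(\tfrac{\mathbb{P}(x)-\mathbb{Q}(x)}{\mu(x)}\right)^2}$ is finite, which is the implicit regularity hypothesis on the triple $(\mathbb{P},\mathbb{Q},\mu)$. Then for every feasible $f$, i.e. $\nor{f}_{\mathcal{L}_2(\pazocal{X},\mu)}\le 1$, Cauchy--Schwarz gives
$$\scalT{f}{g}_{\mathcal{L}_2(\pazocal{X},\mu)}\le \nor{f}_{\mathcal{L}_2(\pazocal{X},\mu)}\,\nor{g}_{\mathcal{L}_2(\pazocal{X},\mu)}\le \nor{g}_{\mathcal{L}_2(\pazocal{X},\mu)},$$
so $\mathcal{F}_{\mu}(\mathbb{P},\mathbb{Q})\le \nor{g}_{\mathcal{L}_2(\pazocal{X},\mu)}$. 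For the reverse inequality I would exhibit the candidate $f_{\chi}:=g/\nor{g}_{\mathcal{L}_2(\pazocal{X},\mu)}$ (assuming $\mathbb{P}\neq\mathbb{Q}$, so the denominator is nonzero; the case $\mathbb{P}=\mathbb{Q}$ is trivial), note $\nor{f_{\chi}}_{\mathcal{L}_2(\pazocal{X},\mu)}=1$ so it is admissible, and compute $\scalT{f_{\chi}}{g}_{\mathcal{L}_2(\pazocal{X},\mu)}=\nor{g}_{\mathcal{L}_2(\pazocal{X},\mu)}$, matching the upper bound. This simultaneously yields $\mathcal{F}_{\mu}(\mathbb{P},\mathbb{Q})=\nor{g}_{\mathcal{L}_2(\pazocal{X},\mu)}=\sqrt{\mathbb{E}_{x\sim\mu}\left(\tfrac{\mathbb{P}(x)-\mathbb{Q}(x)}{\mu(x)}\right)^2}$ and that $f_{\chi}$ attains the supremum. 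The almost-sure uniqueness of the optimal critic then comes from the equality case of Cauchy--Schwarz: equality forces $f$ to be a scalar multiple of $g$ in $\mathcal{L}_2(\pazocal{X},\mu)$, and the normalization constraint $\nor{f}_{\mathcal{L}_2(\pazocal{X},\mu)}=1$ together with the sign of the inner product fixes the multiple to be $1/\mathcal{F}_{\mu}(\mathbb{P},\mathbb{Q})$, $\mu$-almost surely.

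I do not expect a genuine obstacle here; the argument is essentially the Riesz/Cauchy--Schwarz computation. The only point worth flagging is the domain/integrability bookkeeping: one needs $\frac{\mathbb{P}-\mathbb{Q}}{\mu}\in\mathcal{L}_2(\pazocal{X},\mu)$, which holds automatically for convenient choices such as $\mu=\tfrac12(\mathbb{P}+\mathbb{Q})$ (there $\bigl|\tfrac{\mathbb{P}-\mathbb{Q}}{\mu}\bigr|\le 2$ pointwise) but must otherwise be imposed as a hypothesis on the dominant measure; without it the supremum is $+\infty$ and the statement is vacuous. Beyond this, the proof is routine.
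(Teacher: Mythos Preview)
Your proposal is correct and is essentially the same argument as the paper's: the paper phrases it geometrically (the maximizer of a linear functional $\langle f,n\rangle$ over the unit ball is $n/\nor{n}$, giving value $\nor{n}$), which is exactly your Cauchy--Schwarz computation. Your version is in fact more careful, since you also handle the equality case, the degenerate case $\mathbb{P}=\mathbb{Q}$, and the integrability hypothesis on $(\mathbb{P}-\mathbb{Q})/\mu$, none of which the paper addresses.
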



\begin{multicols}{2}[\columnsep 1em] 
\begin{proof}[Proof of Theorem \ref{theo:Fisher}]
From Equation \eqref{eq:fisherGeo}, the optimal $f_{\chi}$ belong to the intersection of the hyperplane that has normal $n=\frac{\mathbb{P}-\mathbb{Q}}{\mu}$, and the ball $\mathbb{B}_2(\pazocal{X},\mu)$, hence $f_{\chi}= \frac{n}{\nor{n}_{\mathcal{L}_2(\pazocal{X},\mu)}}$. Hence $\mathcal{F}(\mathbb{P},\mathbb{Q})=\nor{n}_{\mathcal{L}_2(\pazocal{X},\mu)}$.
\end{proof}
\columnbreak
\centering
\includegraphics[width=0.5\linewidth]{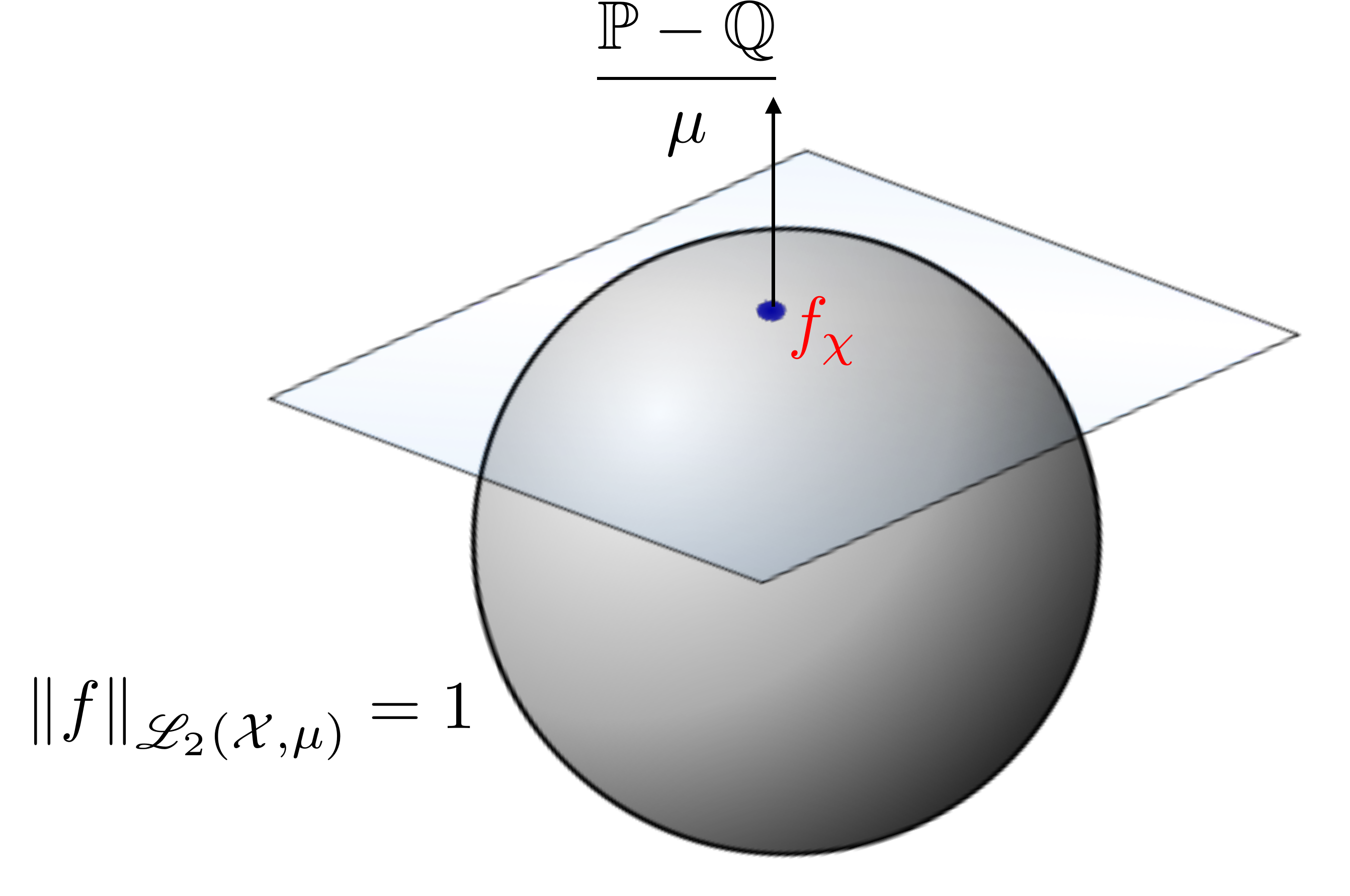}
\end{multicols}

\noindent We see from Theorem \ref{theo:Fisher} the role of the dominant measure $\mu$:
the optimal critic is defined with respect to this measure and the overall Fisher distance can be seen as an average weighted distance between \emph{ probability density functions}, where the average is taken on points sampled from $\mu$.
We give here some choices of $\mu$:
\begin{enumerate}
\item For $\mu=\frac{1}{2} (\mathbb{P}+\mathbb{Q})$, we obtain the symmetric chi-squared distance as defined in \citep{mroueh2017fisher}.
\item $\mu_{GP}$, the implicit distribution defined by the interpolation lines between $\mathbb{P}_r$ and $\mathbb{Q}_{\theta}$ as in \citep{gulrajani2017improved}.
\item When $\mu$ does not dominate $\mathbb{P},$ and $\mathbb{Q}$, we obtain a non symmetric divergence. For example for $\mu=\mathbb{P}$, $\mathcal{F}^2_{\mathbb{P}}(\mathbb{P},\mathbb{Q})= \int_{\pazocal{X}} \frac{(\mathbb{P}(x)-\mathbb{Q}(x))^2}{\mathbb{P}(x)}dx$. We see here that for this particular choice we obtain the Pearson divergence.

\end{enumerate} 

\section{Sobolev IPM} \label{sec:SobolevIPM}
In this Section, we introduce the Sobolev IPM. In a nutshell, the Sobolev IPM constrains the critic function to belong to a ball in the restricted Sobolev Space. In other words we constrain the norm of the gradient of the critic $\nabla_xf(x)$.
We will show that by moving from a Lebesgue constraint as in Fisher IPM to a Sobolev constraint as in Sobolev IPM,
the metric changes from a joint PDF matching to weighted (ccordinate-wise) conditional CDFs matching.
The intrinsic conditioning built in to the Sobolev IPM and the comparison of cumulative distributions makes Sobolev IPM suitable for comparing discrete sequences. 

\subsection{Definition and Expression of Sobolev IPM in terms of Coordinate Conditional CDFs}
We will start by recalling some definitions on Sobolev Spaces. 
We assume in the following that $\pazocal{X}$ is compact and consider functions in the Sobolev space $W^{1,2}(\pazocal{X},\mu)$:
 $$W^{1,2}(\pazocal{X},\mu)=\left\{f:\pazocal{X} \to \mathbb{R}, \int_{\pazocal{X}}\nor{\nabla_x f(x)}^2 \mu(x) dx < \infty\right\},$$
We restrict ourselves to functions in $W^{1,2}(\pazocal{X},\mu)$ vanishing at the boundary, and note this space $W^{1,2}_0(\pazocal{X},\mu)$. Note that in this case:
$\nor{f}_{W^{1,2}_{0}(\pazocal{X},\mu)}=\sqrt{\int_{\pazocal{X}}\nor{\nabla_x f(x)}^2 \mu(x) dx }$
defines a semi-norm. We can similarly define a dot product in $W^{1,2}_0(\pazocal{X},\mu)$, for $f,g \in W^{1,2}_{0}(\pazocal{X},\mu)$:
$$\scalT{f}{g}_{ W^{1,2}_{0}(\pazocal{X},\mu)}= \int_{\pazocal{X}} \scalT{\nabla_x f(x)}{\nabla_x g(x)}_{\mathbb{R}^d}\mu(x)dx.$$

\noindent Hence we define the following Sobolev IPM, by restricting the critic of the mean discrepancy to the Sobolev unit ball :
\begin{equation}
\pazocal{S}_{\bm{\mu}}(\mathbb{P},\mathbb{Q})= \sup_{f \in W^{1,2}_0, \nor{f}_{W^{1,2}_0(\pazocal{X},\bm{\mu})}\leq 1}\Big\{ \mathbb{E}_{x\sim \mathbb{P}}f(x)-\mathbb{E}_{x\sim \mathbb{Q}}f(x)\Big \}
\label{eq:Sobolev}
\end{equation}

\noindent Let $F_{\mathbb{P}}$ and $F_{\mathbb{Q}}$ be \emph{the cumulative distribution functions} of $\mathbb{P}$ and $\mathbb{Q}$ respectively. We have:
\begin{equation}
\mathbb{P}(x)= \frac{\partial^d}{\partial x_1\dots \partial x_d}F_{\mathbb{P}}(x),
\end{equation}
and we define 
$$D^{-i}=\frac{\partial^{d-1}}{\partial x_1\dots \partial x_{i-1}\partial x_{i+1}\dots \partial x_d}, \text{ for } i=1\dots d.$$ 

\noindent $D^{-i}$ computes the $(d-1)$ high-order partial derivative excluding the variable $i$.\\



Our main result is presented in Theorem \ref{theo:Sobolev}. Additional theoretical results are given in Appendix \ref{app:Theory}. All proofs are given in Appendix \ref{app:proofs}.

\begin{theorem}[Sobolev IPM]
Assume that $F_{\mathbb{P}},$ and $F_{\mathbb{Q}}$ and its $d$ derivatives exist and are continuous: $F_{\mathbb{P}}$ and $F_{\mathbb{Q}} \in C^d(\pazocal{X})$.
Define the differential operator $D^-$ :
$$D^-=(D^{-1},\dots D^{-d}).$$
For $x=(x_1,\dots x_{i-1},x_{i},x_{i+1},\dots x_d)$, let $x^{-i}=(x_1,\dots x_{i-1},x_{i+1},\dots x_d)$.

The Sobolev IPM given in Equation \eqref{eq:Sobolev} has the following equivalent forms:
\begin{enumerate}
\item {Sobolev IPM as comparison of high order partial derivatives of CDFs.} The Sobolev IPM has the following form:
\begin{eqnarray*}
\pazocal{S}_{\mu}(\mathbb{P},\mathbb{Q})
= \frac{1}{d}\sqrt{\int_{\pazocal{X}} \frac{\sum_{i=1}^d(D^{-i} F_{\mathbb{P}}(x)- D^{-i}F_{\mathbb{Q}}(x))^2}{\mu(x)} dx}.
\end{eqnarray*}
\item Sobolev IPM as comparison of weighted (coordinate-wise) conditional CDFs. The Sobolev IPM can be written in the following equivalent form:
\begin{equation}
 \pazocal{S}^2_{\mu}(\mathbb{P},\mathbb{Q})= 
 \frac{1}{d^2} \mathbb{E}_{x\sim \mu } \sum_{i=1}^d \left(\frac{\mathbb{P}_{X^{-i}}(x^{-i})F_{\mathbb{P}_{[ X_i | X^{-i}=x^{-i} ]}}(x_i)-\mathbb{Q}_{X^{-i}}(x^{-i})F_{\mathbb{Q}_{[X_i | X^{-i}=x^{-i}]} }(x_i)}{\mu(x)} \right)^2.
\label{eq:Conditional}
\end{equation}
\item The optimal critic $f^*$ satisfies the following identity:
\begin{equation}
  \nabla_x f^*(x)= \frac{1}{d \pazocal{S}_{\mu}(\mathbb{P},\mathbb{Q})} \frac{D^- F_{\mathbb{Q}}(x)- D^-F_{\mathbb{P}}(x)}{\mu(x)}, \bm{\mu}- \text{almost surely.}
\end{equation} 
\end{enumerate}
\label{theo:Sobolev}
\end{theorem}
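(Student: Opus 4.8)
The plan is to mirror the proof of Theorem~\ref{theo:Fisher}: turn the supremum into a Cauchy--Schwarz problem by rewriting the linear functional $f\mapsto \mathbb{E}_{x\sim\mathbb{P}}f(x)-\mathbb{E}_{x\sim\mathbb{Q}}f(x)$ as a fixed inner product $\scalT{\nabla_x f}{u}_{\mathcal{L}_2(\pazocal{X},\mu)}$ in the Sobolev geometry. Two ingredients do all the work: a divergence identity expressing $\mathbb{P}-\mathbb{Q}$ through high-order derivatives of the CDFs, so that an integration by parts moves every derivative onto $f$, and Fubini's theorem, which identifies those high-order CDF derivatives with weighted conditional CDFs.

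\emph{From the functional to a gradient pairing.} For each $i$ one has $\partial_{x_i}D^{-i}F_{\mathbb{P}}(x)=\frac{\partial^d}{\partial x_1\cdots\partial x_d}F_{\mathbb{P}}(x)=\mathbb{P}(x)$, and likewise for $\mathbb{Q}$; averaging over $i$ gives $\mathbb{P}(x)-\mathbb{Q}(x)=\frac{1}{d}\sum_{i=1}^d\partial_{x_i}(D^{-i}F_{\mathbb{P}}(x)-D^{-i}F_{\mathbb{Q}}(x))=\frac{1}{d}\,\mathrm{div}(D^-F_{\mathbb{P}}-D^-F_{\mathbb{Q}})$, all derivatives being continuous by the $C^d$ hypothesis. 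Since $f\in W^{1,2}_0(\pazocal{X},\mu)$ vanishes on $\partial\pazocal{X}$, integration by parts yields
\[
\mathbb{E}_{x\sim\mathbb{P}}f(x)-\mathbb{E}_{x\sim\mathbb{Q}}f(x)=\int_{\pazocal{X}}f(x)(\mathbb{P}(x)-\mathbb{Q}(x))\,dx=\frac{1}{d}\int_{\pazocal{X}}\scalT{\nabla_x f(x)}{D^-F_{\mathbb{Q}}(x)-D^-F_{\mathbb{P}}(x)}\,dx,
\]
which, with $u(x)=\frac{1}{d}\,\frac{D^-F_{\mathbb{Q}}(x)-D^-F_{\mathbb{P}}(x)}{\mu(x)}$, is exactly $\scalT{\nabla_x f}{u}_{\mathcal{L}_2(\pazocal{X},\mu)}$.

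\emph{Cauchy--Schwarz and unpacking.} Then $\mathbb{E}_{x\sim\mathbb{P}}f(x)-\mathbb{E}_{x\sim\mathbb{Q}}f(x)\le \nor{\nabla_x f}_{\mathcal{L}_2(\pazocal{X},\mu)}\nor{u}_{\mathcal{L}_2(\pazocal{X},\mu)}\le\nor{u}_{\mathcal{L}_2(\pazocal{X},\mu)}$ on the unit Sobolev ball, with equality for $\nabla_x f^*=u/\nor{u}_{\mathcal{L}_2(\pazocal{X},\mu)}$, which is statement~3; and $\pazocal{S}_\mu(\mathbb{P},\mathbb{Q})=\nor{u}_{\mathcal{L}_2(\pazocal{X},\mu)}=\frac{1}{d}\sqrt{\int_{\pazocal{X}}\frac{\sum_{i=1}^d(D^{-i}F_{\mathbb{P}}(x)-D^{-i}F_{\mathbb{Q}}(x))^2}{\mu(x)}\,dx}$ is statement~1. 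For statement~2, apply $D^{-i}$ to $F_{\mathbb{P}}(x)=\int_{-\infty}^{x_1}\!\cdots\!\int_{-\infty}^{x_d}\mathbb{P}(t)\,dt$: the $d-1$ derivatives undo every integral except the $i$-th, so $D^{-i}F_{\mathbb{P}}(x)=\int_{-\infty}^{x_i}\mathbb{P}(x_1,\dots,x_{i-1},t,x_{i+1},\dots,x_d)\,dt=\mathbb{P}_{X^{-i}}(x^{-i})F_{\mathbb{P}_{[X_i|X^{-i}=x^{-i}]}}(x_i)=\phi_i(\mathbb{P})$ after factoring the joint density into marginal times conditional; substituting into statement~1 and squaring gives \eqref{eq:Conditional}.

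\emph{The main obstacle.} The chain of identities above is routine; the real difficulty is attainment in the Cauchy--Schwarz step. Equality requires that the representer $u$ be, up to scaling, the gradient of an admissible critic, i.e.\ $\mu$-curl-free, which is not automatic for a generic triple $\mathbb{P},\mathbb{Q},\mu$ and needs a separate argument — equivalently, one must know that the minimal-$\mathcal{L}_2(\pazocal{X},\mu)$-norm representer of the functional is the symmetric one displayed above. The natural way to close this gap is the elliptic PDE characterization of the optimal critic: $f^*$ should solve $\mathrm{div}(\mu\nabla_x f^*)\propto \mathbb{Q}-\mathbb{P}$ with zero boundary data, whose solvability (Lax--Milgram on $W^{1,2}_0(\pazocal{X},\mu)$) produces the extremal $f^*$ and pins down $\pazocal{S}_\mu$; this is exactly the analysis deferred to Appendix~\ref{app:Theory}. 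Beyond this, one must verify the regularity and decay needed to discard the boundary terms in the integration by parts, which the $C^d$ assumption and the zero-boundary constraint defining $W^{1,2}_0(\pazocal{X},\mu)$ are there to provide.
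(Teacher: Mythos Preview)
Your proof is correct and follows essentially the same route as the paper: rewrite $\mathbb{P}-\mathbb{Q}$ via $D^{-i}F_{\mathbb{P}}$, integrate by parts using the zero-boundary condition, and take the sup of an $\mathcal{L}_2(\pazocal{X},\mu)$ inner product over the unit ball to read off both $\pazocal{S}_\mu$ and $\nabla_x f^*$; the conditional-CDF identity for $D^{-i}F_{\mathbb{P}}$ is also derived identically in the paper. Your explicit flag that Cauchy--Schwarz attainment requires the representer $u$ to be a genuine gradient is a point the paper sidesteps by silently enlarging the supremum from $\{\nabla_x f:\nor{f}_{W^{1,2}_0}\le 1\}$ to all of the $\mathcal{L}_2(\pazocal{X},\mu)^{\otimes d}$ unit ball, so you are in fact more careful here than the original.
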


We show in Appendix \ref{app:Theory} that the optimal Sobolev critic is the solution of the following elliptic PDE (with zero boundary conditions):
\begin{equation}
 \frac{\mathbb{P}-\mathbb{Q}}{\pazocal{S}_{\mu}(\mathbb{P},\mathbb{Q})}=- \text{div} (\mu(x)\nabla_x f(x)).
 \label{eq:fokkerPlanckpde}
\end{equation}
Appendix \ref{app:Theory} gives additional theoretical results of Sobolev IPM in terms of 1) approximating Sobolev critic in a function hypothesis class such as neural networks 2) Linking the elliptic PDE given in Equation \eqref{eq:fokkerPlanckpde} and the Fokker-Planck diffusion.
As we illustrate in Figure \ref{fig:1(a)} the gradient of the critic defines a transportation plan for moving the distribution mass from $\mathbb{Q}$ to $\mathbb{P}$. 

\paragraph{Discussion of Theorem \ref{theo:Sobolev}.} We make the following remarks on Theorem \ref{theo:Sobolev}:
\begin{enumerate}[leftmargin=0.5cm]
\item From Theorem \ref{theo:Sobolev}, we see that the Sobolev IPM compares $d$ higher order partial derivatives of the cumulative distributions $F_{\mathbb{P}}$ and $F_{\mathbb{Q}}$, while Fisher IPM compares the probability density functions. 
\item The dominant measure $\mu$ plays a similar role to Fisher:
$$\pazocal{S}^2_{\mu}(\mathbb{P},\mathbb{Q})=\frac{1}{d^2}\sum_{i=1}^d\mathbb{E}_{x\sim \mu } \left(\frac{D^{-i}F_{\mathbb{P}}(x) - D^{-i}F_{\mathbb{Q}}(x)}{\mu(x)}\right)^2,$$
the average distance is defined with respect to points sampled from $\mu$.
\item \textbf{Comparison of coordinate-wise Conditional CDFs.}
We note in the following $x^{-i}=(x_1,\dots x_{i-1},x_{i+1},\dots x_d)$.
Note that we have:
\begin{align*}
D^{-i}F_{\mathbb{P}}(x)&= \frac{\partial^{d-1}}{\partial x_1\dots \partial x_{i-1}\partial x_{i+1}\dots \partial x_d} \int_{-\infty}^{x_1}\dots \int_{-\infty}^{x_d}\mathbb{P}(u_1\dots u_d)du_1\dots du_d\\
&=\int_{-\infty}^{x_i} \mathbb{P}(x_1,\dots,x_{i-1},u,x_{i+1},\dots,x_{d}) du\\
&=\mathbb{P}_{X^{-i}}(x_1,\dots,x_{i-1},x_{i+1},\dots x_d)\small{\int_{-\infty}^{x_i} \mathbb{P}_{[X_i|X^{-i}=x^{-i}]}(u|x_1,\dots,x_{i-1},x_{i+1},\dots x_d)du}\\
& \text{(Using Bayes rule)}\\
&= \mathbb{P}_{X^{-i}}(x^{-i})F_{\mathbb{P}_{ [X_i | X^{-i}=x^{-i} ]}}(x_i),
\end{align*}
Note that for each $i$, $D^{-i}F_{\mathbb{P}}(x)$ is the cumulative distribution of the variable $X_i$ given the other variables $X^{-i}=x^{-i}$, weighted by the density function of $X^{-i}$ at $x^{-i}$. 
This leads us to the form given in Equation \ref{eq:Conditional}.

We see that the Sobolev IPM compares for each dimension $i$ the conditional cumulative distribution of each variable given the other variables, weighted by their density function.
We refer to this as comparison of coordinate-wise CDFs on a leave one out basis.
From this we see that we are comparing CDFs, which are better behaved on discrete distributions.
Moreover, the conditioning built in to this metric will play a crucial role in comparing sequences as the conditioning is important in this context (See section \ref{sec:SobolevText}). 


\end{enumerate}

\subsection{Illustrative Examples } \label{sec:app}
\paragraph{Sobolev IPM / Cram\'er Distance and Wasserstein-1 in one Dimension.} In one dimension, Sobolev IPM is the Cram\'er Distance (for $\mu$ uniform on $\pazocal{X}$, we note this $\mu:=1$). While Sobolev IPM in one dimension measures the discrepancy between CDFs, the one dimensional Wasserstein-$p$ distance measures the discrepancy between inverse CDFs:
$$\pazocal{S}^2_{\mu :=1}(\mathbb{P},\mathbb{Q})=\int_{\pazocal{X}}(F_{\mathbb{P}}(x)-F_{\mathbb{Q}}(x))^2dx \text{ versus } W^p_p(\mathbb{P},\mathbb{Q})=\int_{0}^1 |F^{-1}_{\mathbb{P}}(u)- F^{-1}_{\mathbb{Q}}(u) |^p du,$$
Recall also that the Fisher IPM for uniform $\mu$ is given by :
$$\mathcal{F}^2_{\mu:=1}(\mathbb{P},\mathbb{Q})=\int_{\pazocal{X}}(\mathbb{P}(x)-\mathbb{Q}(x))^2 dx.$$
Consider for instance two point masses $\mathbb{P}=\delta_{a_1}$ and $\mathbb{Q}=\delta_{a_2}$ with $a_1,a_2 \in \mathbb{R}$.
The rationale behind using Wasserstein distance for GAN training is that since it is a weak metric,
for far distributions Wasserstein distance provides some signal \citep{arjovsky2017wasserstein}.
In this case, it is easy to see that $W^1_1(\mathbb{P},\mathbb{Q})=\pazocal{S}^2_{\mu :=1} = |a_1-a_2|$, while $\mathcal{F}^2_{\mu:=1}(\mathbb{P},\mathbb{Q})=2$.
As we see from this simple example, \emph{CDF} comparison is more suitable than PDF for comparing distributions on \emph{discrete spaces}.
\begin{figure}[ht!]
\vskip -0.1 in
\centering
\subfigure[Numerical solution of the PDE satisfied by the optimal Sobolev critic.]
{\includegraphics[width=0.45\linewidth]{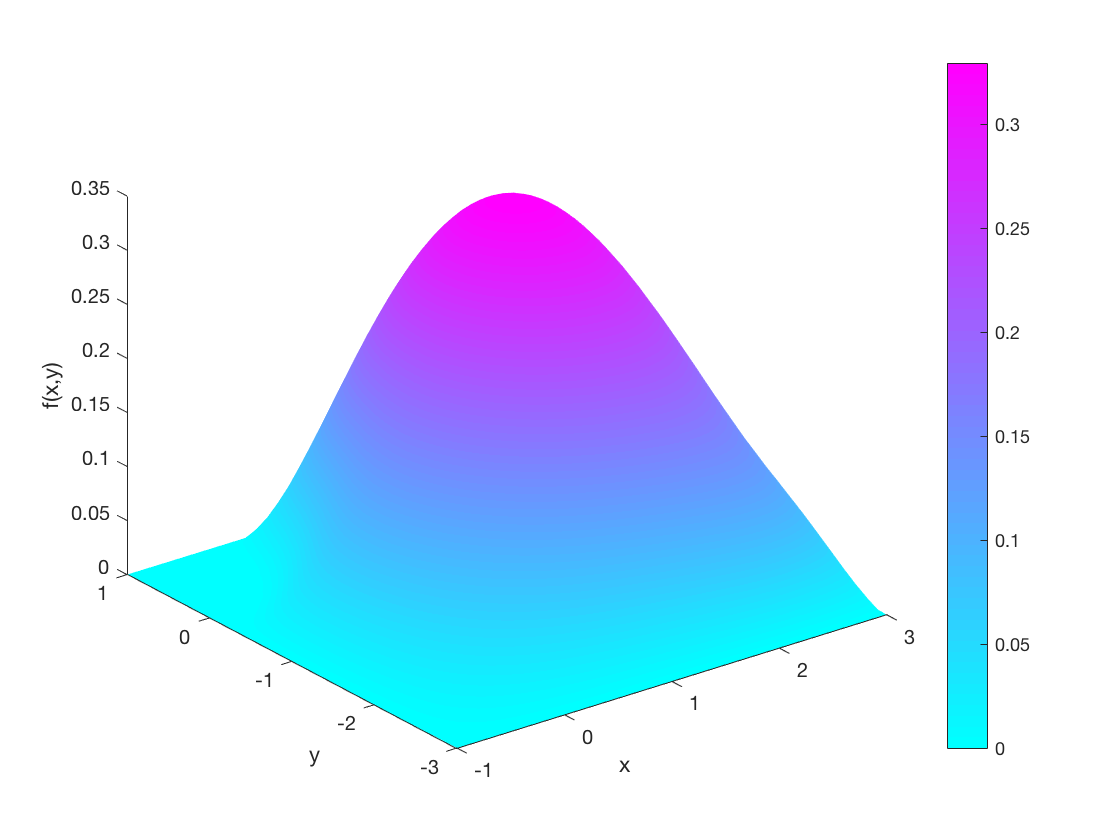} \label{fig:1(b)}}
~~
\subfigure[ Optimal Sobolev Transport Vector Field $\nabla_x f^*(x)$ (arrows are the vector field $\nabla_xf^*(x)$ evaluated on the 2D grid. Magnitude of arrows was rescaled for visualization.) ]
{\includegraphics[width=0.45\linewidth]{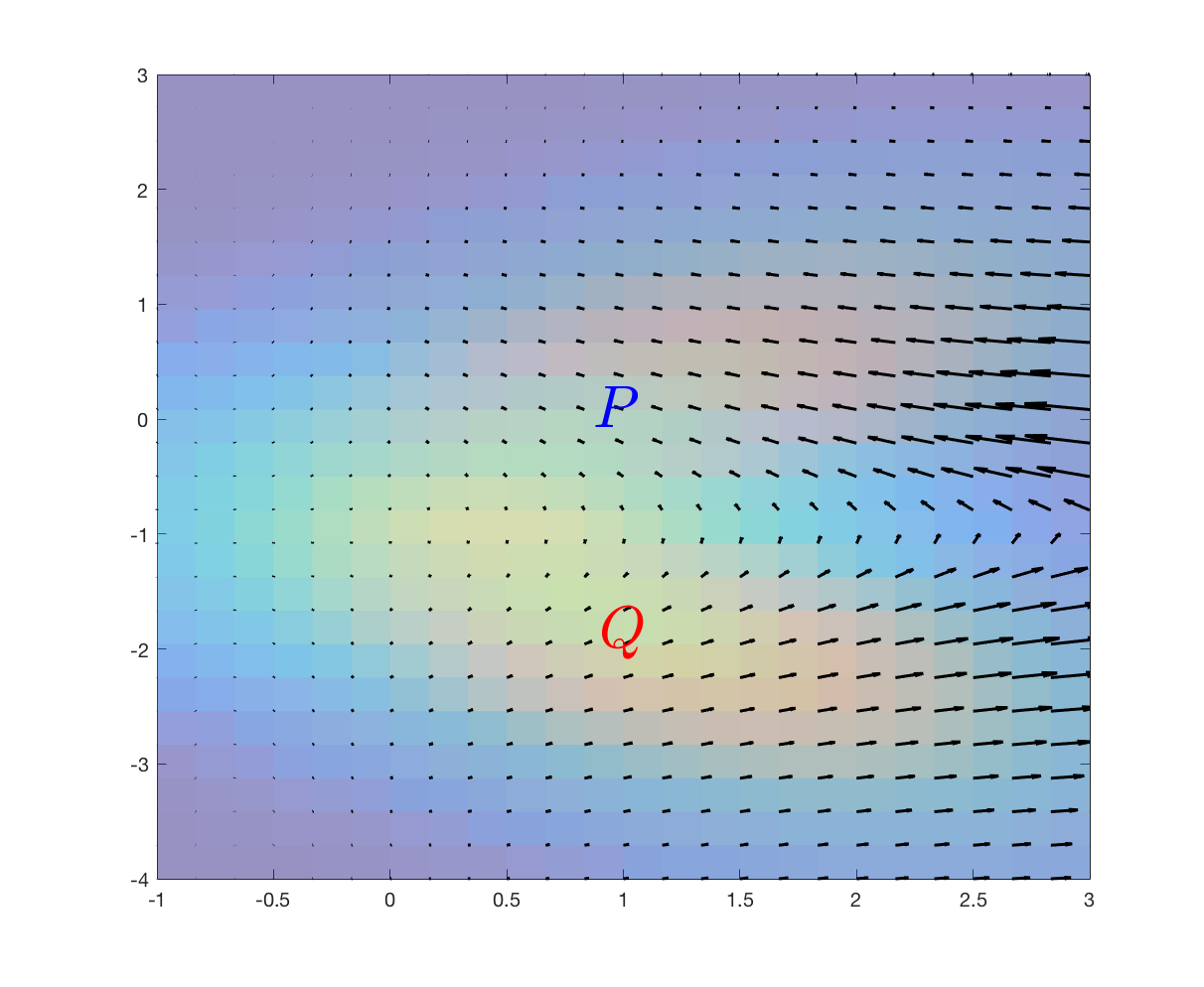} \label{fig:1(a)}} 
\label{fig:illustration}
\caption{Numerical solution of the PDE satisfied by the optimal Sobolev critic and the transportation Plan induced by the gradient of Sobolev critic.
The \emph{gradient of the critic} (wrt to the input), defines on the support of $\mu=\frac{\mathbb{P}+\mathbb{Q}}{2}$ a \emph{transportation plan for moving the distribution mass from $\mathbb{Q}$ to $\mathbb{P}$}.
  For a theoretical analysis of this transportation plan and its relation to Fokker-Planck diffusion the reader is invited to check Appendix \ref{app:Theory}.}
\label{fig:Transport}
\end{figure}
\paragraph{Sobolev IPM between two 2D Gaussians.}
We consider $\mathbb{P}$ and $\mathbb{Q}$ to be two dimensional Gaussians with means $\mu_1$ and $\mu_2$ and covariances $\Sigma_1$ and $\Sigma_2$. Let $(x,y)$ be the coordinates in 2D. We note $F_{\mathbb{P}}$ and $F_{\mathbb{Q}}$ the CDFs of $\mathbb{P}$ and $\mathbb{Q}$ respectively. We consider in this example  $\mu=\frac{\mathbb{P}+\mathbb{Q}}{2}$. We know from Theorem \ref{theo:Sobolev} that the gradient of the Sobolev optimal critic is proportional to the following vector field:
\begin{equation}
 \nabla f^*(x,y) ~\alpha~ \frac{1}{\mu(x,y)}\begin{bmatrix}
 \frac{\partial}{\partial y} (F_{\mathbb{Q}}(x,y)-F_{\mathbb{P}}(x,y)) \\
 \frac{\partial}{\partial x} (F_{\mathbb{Q}}(x,y)-F_{\mathbb{P}}(x,y)) \\
 \end{bmatrix}
 \label{eq:2Dtoy}
\end{equation}
\vskip -0.1 in
In Figure \ref{fig:Transport} we consider $\mu_1 = [1 ,0], \Sigma_1 =\begin{bmatrix}
 1.9 & 0.8 \\
 0.8 & 1.3 \\
 \end{bmatrix}
 \mu_2=[1 ,-2 ], \Sigma_2 =\begin{bmatrix}
 1.9 &- 0.8 \\
 -0.8 & 1.3 \\
\end{bmatrix}.$

In Figure \ref{fig:1(b)} we plot the numerical solution of the PDE satisfied by the optimal Sobolev critic given in Equation \eqref{eq:fokkerPlanckpde}, using \textsc{Matlab} solver for elliptic PDEs (more accurately we solve $-div(\mu(x) \nabla_xf(x))= \mathbb{P}(x)-\mathbb{Q}(x)$, hence we obtain  the solution of Equation \eqref{eq:fokkerPlanckpde} up to a normalization constant ($\frac{1}{\pazocal{S}_{\mu}(\mathbb{P},\mathbb{Q})}$)).
We numerically solve the PDE on a rectangle with zero boundary conditions. We see that the optimal Sobolev critic separates the two distributions well.
In Figure \ref{fig:1(a)} we then numerically compute the gradient of the optimal Sobolev critic on a 2D grid as given in Equation \ref{eq:2Dtoy} (using numerical evaluation of the CDF and finite difference for the evaluation of the partial derivatives).
We plot in Figure \ref{fig:1(a)} the density functions of $\mathbb{P}$ and $\mathbb{Q}$ as well as the vector field of the gradient of the optimal Sobolev critic.
As discussed in Section \ref{sec:Transport}, we see that the gradient of the critic (wrt to the input), defines on the support of $\mu=\frac{\mathbb{P}+\mathbb{Q}}{2}$ a transportation plan for moving the distribution mass from $\mathbb{Q}$ to $\mathbb{P}$.

\section{Sobolev GAN}\label{sec:Sobolevgan}
Now we turn to the problem of learning GANs with Sobolev IPM.
Given the ``real distribution'' $\mathbb{P}_r\in \mathcal{P}(\pazocal{X})$, our goal is to learn a generator $g_{\theta}: \pazocal{Z}\subset \mathbb{R}^{n_z}\to \pazocal{X},$
such that for $z\sim p_{z}$, the distribution of $g_{\theta}(z)$ is close to the real data distribution $\mathbb{P}_{r}$, where $p_{z}$ is a fixed distribution on $\pazocal{Z}$ (for instance $z\sim \mathcal{N}(0,I_{n_z})$). 
We note $\mathbb{Q}_{\theta}$ for the ``fake distribution" of $g_{\theta}(z),z\sim p_{z}$. Consider $\{x_i,i=1\dots N\} \sim \mathbb{P}_r$, $\{z_i,i=1\dots N \}\sim \mathcal{N}(0,I_{n_z})$, and $\{\tilde{x}_i,i=1\dots N \}\sim \mu $.
We consider these choices for $\mu$:
\begin{enumerate}
  \item $\mu =\frac{\mathbb{P}_r+\mathbb{Q}_{\theta}}{2}$ i.e $\tilde{x} \sim \mathbb{P}_r $ or $\tilde{x}=g_{\theta}(z),z\sim p_z$ with equal probability $\frac{1}{2}$. 
\item $\mu_{GP}$ is the implicit distribution defined by the interpolation lines between $\mathbb{P}_r$ and $\mathbb{Q}_{\theta}$ as in \citep{gulrajani2017improved} i.e : $\tilde{x}= u x +(1-u)y, x\sim \mathbb{P}_r, y=g_{\theta}(z), z\sim p_{z}$ and $u \sim \text{Unif}[0,1]$. 
\end{enumerate}
Sobolev GAN can be written as follows:
\vskip -0.3 in
$$\min_{g_{\theta}} \sup_{f_p, \frac{1}{N}\sum_{i=1}^N \nor{\nabla_x f_{p}(\tilde{x}_i)}^2 =1}\hat{\mathcal{E}}(f_p,g_\theta) =\frac{1}{N}\sum_{i=1}^N f_p(x_i) -\frac{1}{N}\sum_{i=1}^N f_p(g_{\theta}(z_i)) $$
For any choice of the parametric function class $\mathcal{H}_{p}$ , note the constraint by
$\hat{\Omega}_{S}(f_p,g_{\theta}) = \frac{1}{N} \sum_{i=1}^N \nor{\nabla_x f_{p}(\tilde{x}_i)}^2 .$ For example if $\mu=\frac{\mathbb{P}_r+\mathbb{Q}_{\theta}}{2}$, $\hat{\Omega}_{S}(f_p,g_{\theta}) = \frac{1}{2N} \sum_{i=1}^N \nor{\nabla_x f_{p}({x}_i)}^2 +\frac{1}{2N} \sum_{i=1}^N \nor{\nabla_x f_{p}(g_{\theta}({z}_i))}^2$. Note that, since the optimal theoretical critic is achieved on the sphere, we impose a sphere constraint rather than a ball constraint. 
Similar to \citep{mroueh2017fisher} we define the Augmented Lagrangian corresponding to Sobolev GAN objective and constraint
\begin{equation}
\pazocal{L}_{S}(p,\theta,\lambda)= \hat{\mathcal{E}}(f_p,g_\theta)+ \lambda(1-\hat{\Omega}_{S}(f_p,g_{\theta}))-\frac{\rho}{2}(\hat{\Omega}_{S}(f_{p},g_{\theta})-1)^2
\label{eq:alm}
\end{equation}
where $\lambda$ is the Lagrange multiplier and $\rho>0$ is the quadratic penalty weight.
We alternate between optimizing the critic and the generator. We impose the constraint when training the critic only. Given $\theta$, we solve $\max_{p} \min_{\lambda} \pazocal{L}_{S}(p,\theta,\lambda),$ for training the critic.
Then given the critic parameters $p$ we optimize the generator weights $\theta$ to minimize the objective $ \min_{\theta}\hat{\mathcal{E}}(f_p,g_\theta) .$
See Algorithm \ref{alg:sobolevGAN}.

\begin{algorithm}[ht!]
\caption{Sobolev GAN}
 \label{alg:sobolevGAN}
\begin{algorithmic}
 \STATE {\bfseries Input:} $\rho$ penalty weight, $\eta$ Learning rate, $n_c$ number of iterations for training the critic, N batch size
 \STATE {\bfseries Initialize} $p,\theta, \lambda=0$ 
 \REPEAT
 \FOR{$j=1$ {\bfseries to} $n_c$}
 \STATE Sample a minibatch $x_i,i=1\dots N, x_i \sim \mathbb{P}_r$ 
 \STATE Sample a minibatch $z_i,i=1\dots N, z_i \sim p_z$ 
 \STATE $(g_{p},g_{\lambda})\gets (\nabla_{p} {\pazocal{L}_{S}},\nabla_{\lambda}\pazocal{L}_{S})(p,\theta,\lambda) $
 \STATE $ p\gets p +\eta \text{ ADAM }(p,g_{p})$\\
 \STATE $\lambda \gets \lambda - \rho g_{\lambda}$ \COMMENT{SGD rule on $\lambda$ with learning rate $\rho$}
 \ENDFOR
 \STATE Sample ${z_i,i=1\dots N , z_i \sim p_z}$
 \STATE $d_{\theta}\gets \nabla_{\theta}\hat{\mathcal{E}}(f_p,g_{\theta})= -\nabla_{\theta}\frac{1}{N}\sum_{i=1}^N f_{p}(g_{\theta}(z_i))$ 
 \STATE $\theta \gets \theta -\eta \text{ ADAM }(\theta,d_{\theta})$
 \UNTIL{$\theta$ converges}
\end{algorithmic}
\end{algorithm}

\paragraph{Relation to WGAN-GP.} WGAN-GP can be written as follows:
$$\min_{g_{\theta}} \sup_{f, \nor{ \nabla_x f_{p}(\tilde{x}_i)} =1, \tilde{x}_i \sim \mu_{GP} }\hat{\mathcal{E}}(f_p,g_\theta) =\frac{1}{N}\sum_{i=1}^N f_p(x_i) -\frac{1}{N}\sum_{i=1}^N f_p(g_{\theta}(z_i)) $$
The main difference between WGAN-GP and our setting, is that WGAN-GP enforces \emph{pointwise constraints} on points drawn from $\mu=\mu_{GP}$ via a point-wise quadratic penalty $(\hat{\mathcal{E}}(f_p,g_\theta)-\lambda \sum_{i=1}^N(1-\nor{\nabla_x f(\tilde{x}_i)})^2)$ while we enforce that constraint on average as a Sobolev norm, allowing us the coordinate weighted conditional CDF interpretation of the IPM.

\section{Applications of Sobolev GAN}
Sobolev IPM has two important properties; The first stems from the \emph{conditioning} built in to the metric through the weighted conditional CDF interpretation. The second stems from the \emph{diffusion} properties that the critic of Sobolev IPM satisfies (Appendix \ref{app:Theory}) that has theoretical and practical ties to the Laplacian regularizer and diffusion on manifolds used in semi-supervised learning \citep{Belkin2006}.

In this Section, we exploit those two important properties in two applications of Sobolev GAN:
Text generation and semi-supervised learning.
First in \emph{text generation}, which can be seen as a discrete sequence generation,
Sobolev GAN (and WGAN-GP) enable training GANs without need to do explicit brute-force conditioning.
We attribute this to the built-in conditioning in Sobolev IPM (for the sequence aspect) and to the CDF matching (for the discrete aspect).
Secondly using GANs in semi-supervised learning is a promising avenue for learning using unlabeled data.
We show that a variant of Sobolev GAN can achieve strong SSL results on the CIFAR-10 dataset, without the need of any form of activation normalization in the networks or any extra ad hoc tricks. 

\subsection{Text Generation with Sobolev GAN}\label{sec:SobolevText}

In this Section, we present an empirical study of Sobolev GAN in character level text generation.
Our empirical study on end to end training of character-level GAN for text generation is articulated on four dimensions $\bm{\textbf{(loss, critic, generator, } \mu)}$.
(1) the loss used (\textbf{GP}: WGAN-GP \citep{gulrajani2017improved}, \textbf{S}: Sobolev or \textbf{F}: Fisher)
(2) the architecture of the critic (Resnets or RNN)
(3) the architecture of the generator (Resnets or RNN or RNN with curriculum learning) 
(4) the sampling distribution $\bm{\mu}$ in the constraint.

\noindent \textbf{Text Generation Experiments.} We train a character-level GAN on Google Billion Word dataset and follow the same experimental setup used in~\citep{gulrajani2017improved}.
The generated sequence length is $32$ and the evaluation is based on Jensen-Shannon divergence on empirical 4-gram probabilities (JS-4) of validation data and generated data.
JS-4 may not be an ideal evaluation criteria, but it is a reasonable metric for current character-level GAN results,
which is still far from generating meaningful sentences.

\noindent \textbf{Annealed Smoothing of discrete $\mathbb{P}_r$ in the constraint $\bm{\mu}$.} Since the generator distribution will always be defined on a continuous space,
we can replace the discrete ``real'' distribution $\mathbb{P}_r$ with a smoothed version (Gaussian kernel smoothing) $ \mathbb{P}_r\star \mathcal{N}(0,\sigma^2 I_d)$.
This corresponds to doing the following sampling for $\mathbb{P}_r: x+\xi, x\sim \mathbb{P}_r,$ and $\xi \sim \mathcal{N}(0,\sigma^2 I_d)$.
Note that we only inject noise to the ``real" distribution with the goal of smoothing the support of the discrete distribution,
as opposed to instance noise on both ``real'' and ``fake'' to stabilize the training, as introduced in \citep{kaae2016amortised,arjovsky2017towards}.
As it is common in optimization by continuation \citep{aaai2015mobahi}, we also anneal the noise level $\sigma$ as the training progresses on a linear schedule. 

\begin{figure}[ht!]
\centering
\subfigure[Comparing Sobolev with $\mu_{GP}$ and WGAN-GP. The JS-4 are $0.3363$ and $0.3302$ respectively.]
{\label{fig:resres_gp_sobolev}\includegraphics[width=0.4\textwidth]{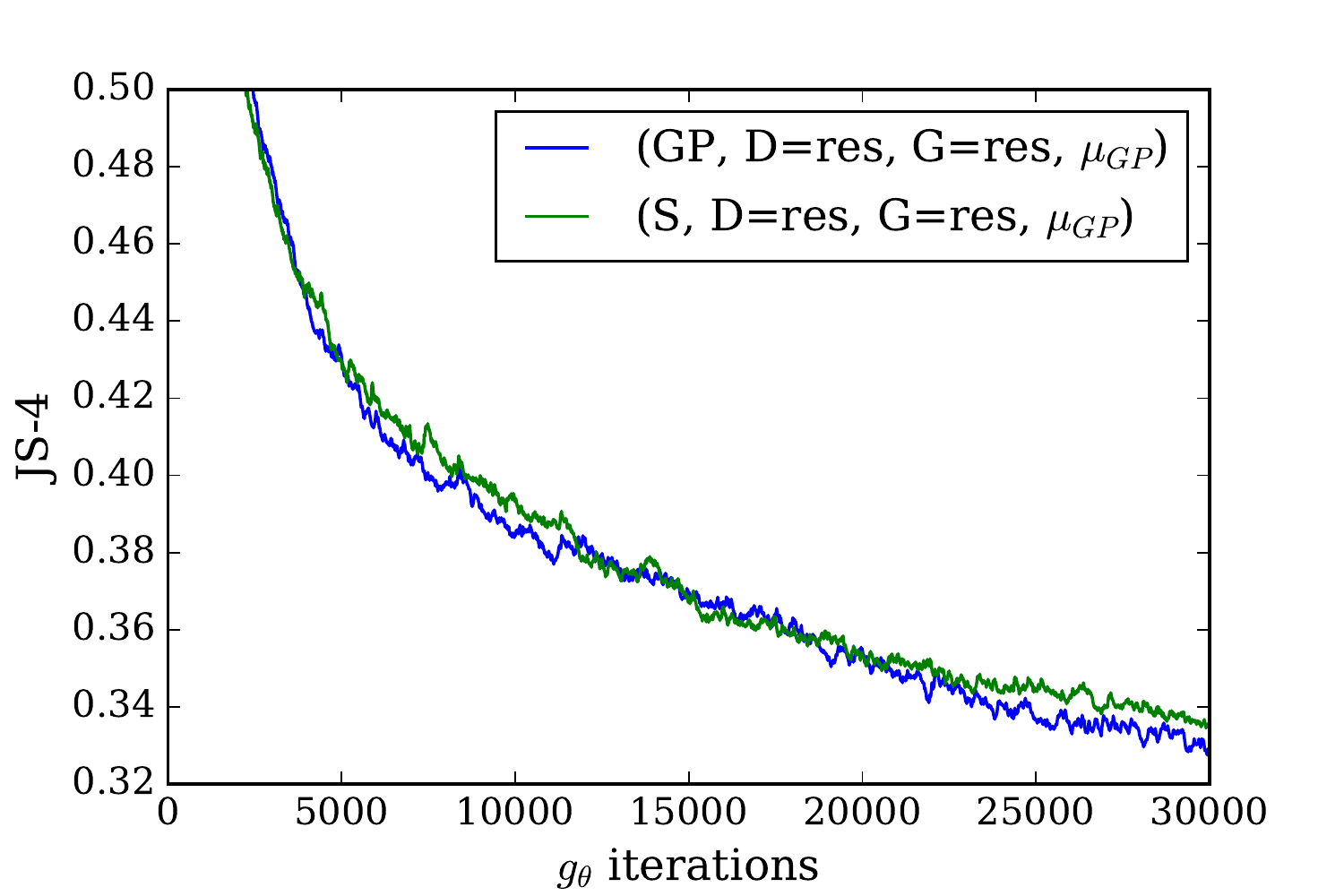}}
~~~~
\subfigure[Comparing Sobolev with different $\mu$ dominant measures and WGAN-GP. The JS-4 of $\mu_s^a(\sigma_0=1.5)$ is $0.3268$.]
{\label{fig:diff_pert_sobolev}\includegraphics[width=0.4\textwidth]{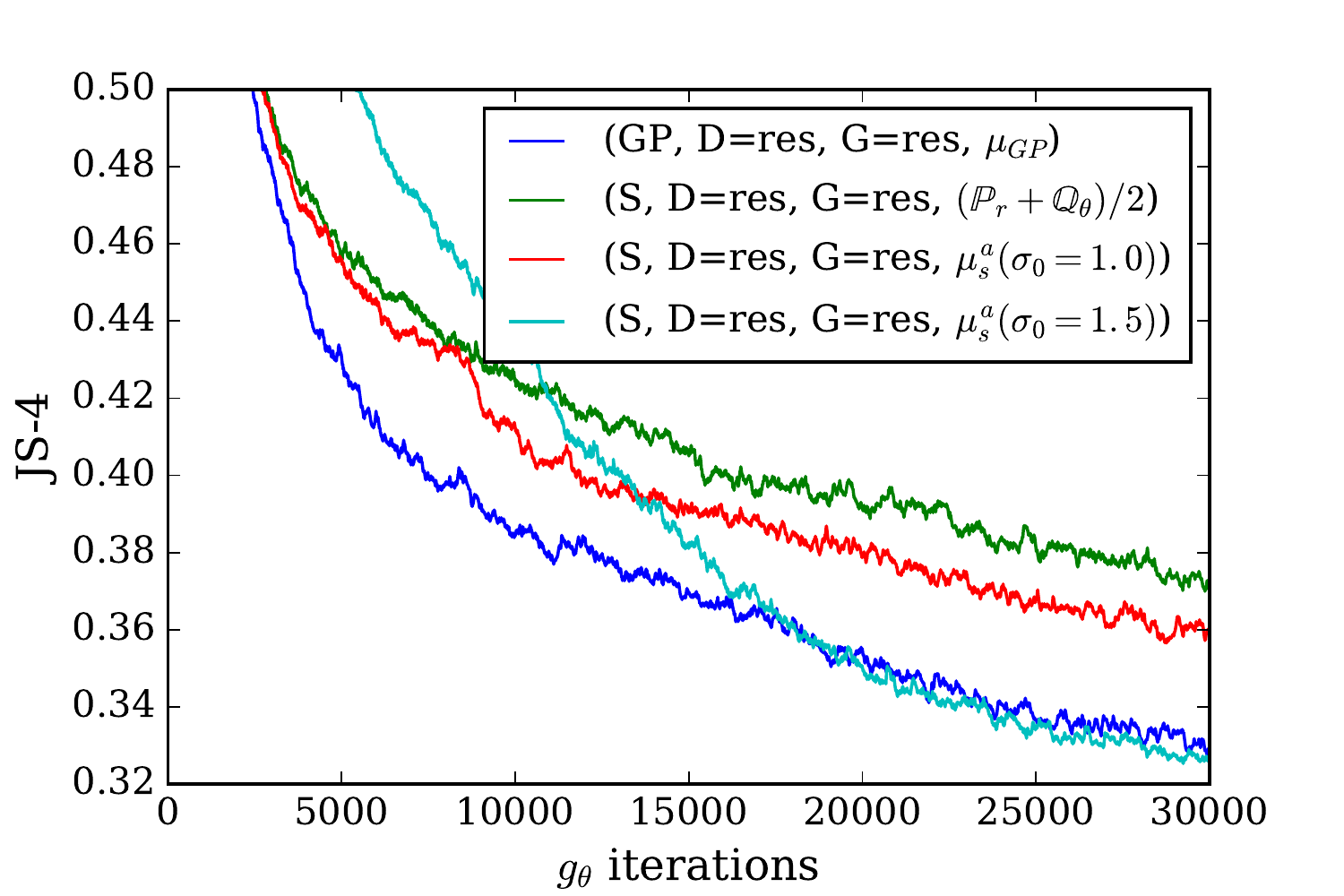}}
\caption{Result of Sobolev GAN for various dominating measure $\mu$, for resnets as architectures of the critic and the generator.}
\end{figure}

\noindent\textbf{Sobolev GAN versus WGAN-GP with Resnets.}~ In this setting, we compare (WGAN-GP,G$=$Resnet,D$=$Resnet,$\mu=\mu_{GP}$) to (Sobolev,G$=$Resnet,D$=$Resnet,$\mu$) where $\mu$ is one of:
(1) $\mu_{GP}$, (2) the noise smoothed $\mu_s(\sigma) = \frac{\mathbb{P}_r\star\mathcal{N}(0,\sigma^2I_d)+\mathbb{Q}_{\theta}}{2}$ or (3) noise smoothed with annealing $\mu_s^a(\sigma_0)$ with $\sigma_0$ the initial noise level.
We use the same architectures of Resnet with 1D convolution for the critic and the generator as in \citep{gulrajani2017improved} (4 resnet blocks with hidden layer size of $512$).
In order to implement the noise smoothing we transform the data into one-hot vectors.
Each one hot vector $x$ is transformed to a probability vector $p$ with $0.9$ replacing the one and $0.1/(dict_{size}-1)$ replacing the zero.
We then sample $\epsilon$ from a Gaussian distribution $\mathcal{N}(0, \sigma^2)$, and use softmax to normalize $\log p + \epsilon$.
We use algorithm \ref{alg:sobolevGAN} for Sobolev GAN and fix the learning rate to $10^{-4}$ and $\rho$ to $10^{-5}$.
The noise level $\sigma $ was annealed following a linear schedule starting from an initial noise level $\sigma_0$ (at iteration $i$, $\sigma_i=\sigma_0(1-\frac{i}{Maxiter})$, Maxiter=$30$K).
For WGAN-GP we used the open source implementation with the penalty $\lambda=10$ as in~\citep{gulrajani2017improved}.
Results are given in Figure~\ref{fig:resres_gp_sobolev} for the JS-4 evaluation of both WGAN-GP and Sobolev GAN for $\mu=\mu_{GP}$.
In Figure~\ref{fig:diff_pert_sobolev} we show the JS-4 evaluation of Sobolev GAN with the annealed noise smoothing $\mu_s^a(\sigma_0)$, for various values of the initial noise level $\sigma_0$.
We see that the training succeeds in both cases. Sobolev GAN achieves slightly better results than WGAN-GP for the annealing that starts with high noise level $\sigma_0=1.5$. We note that without smoothing and annealing i.e using $\mu = \frac{\mathbb{P}_r+\mathbb{Q}_{\theta}}{2}$, Sobolev GAN is behind.
Annealed smoothing of $\mathbb{P}_r$, helps the training as the real distribution is slowly going from a continuous distribution to a discrete distribution.
See Appendix \ref{app:text} (Figure \ref{fig:annealing}) for a comparison between annealed and non annealed smoothing.

We give in Appendix \ref{app:text} a comparison of WGAN-GP and Sobolev GAN for a Resnet generator architecture and an RNN critic. The RNN has degraded performance due to optimization difficulties.


\noindent \textbf{Fisher GAN Curriculum Conditioning versus Sobolev GAN: Explicit versus Implicit conditioning.}
We analyze how Fisher GAN behaves under different architectures of generators and critics. We first fix the generator to be ResNet. We study 3 different architectures of critics: ResNet, GRU (we follow the experimental setup from \citep{press2017language}),
and hybrid ResNet+GRU \citep{reed2016learning}.
We notice that RNN is unstable, we need to clip the gradient values of critics in $[-0.5, 0.5]$, and the gradient of the Lagrange multiplier $\lambda_F$ to $[-10^4, 10^4]$.
We fix $\rho_F=10^{-7}$ and we use $\mu=\mu_{GP}$.
We search the value for the learning rate in $[10^{-5}, 10^{-4}]$.
We see that for $\mu=\mu_{GP}$ and $G=$ Resnet for various critic architectures, Fisher GAN fails at the task of text generation (Figure \ref{fig:fisher} a-c).
Nevertheless, when using RNN critics (Fig \ref{fig:fisher} b, c) a marginal improvement happens over the fully collapsed state when using a resnet critic (Fig \ref{fig:fisher} a).
We hypothesize that RNN critics enable some conditioning and factoring of the distribution, which is lacking in Fisher IPM.
\begin{figure}[h]
\vskip -0.15 in
\centering
\includegraphics[width=0.50\linewidth]{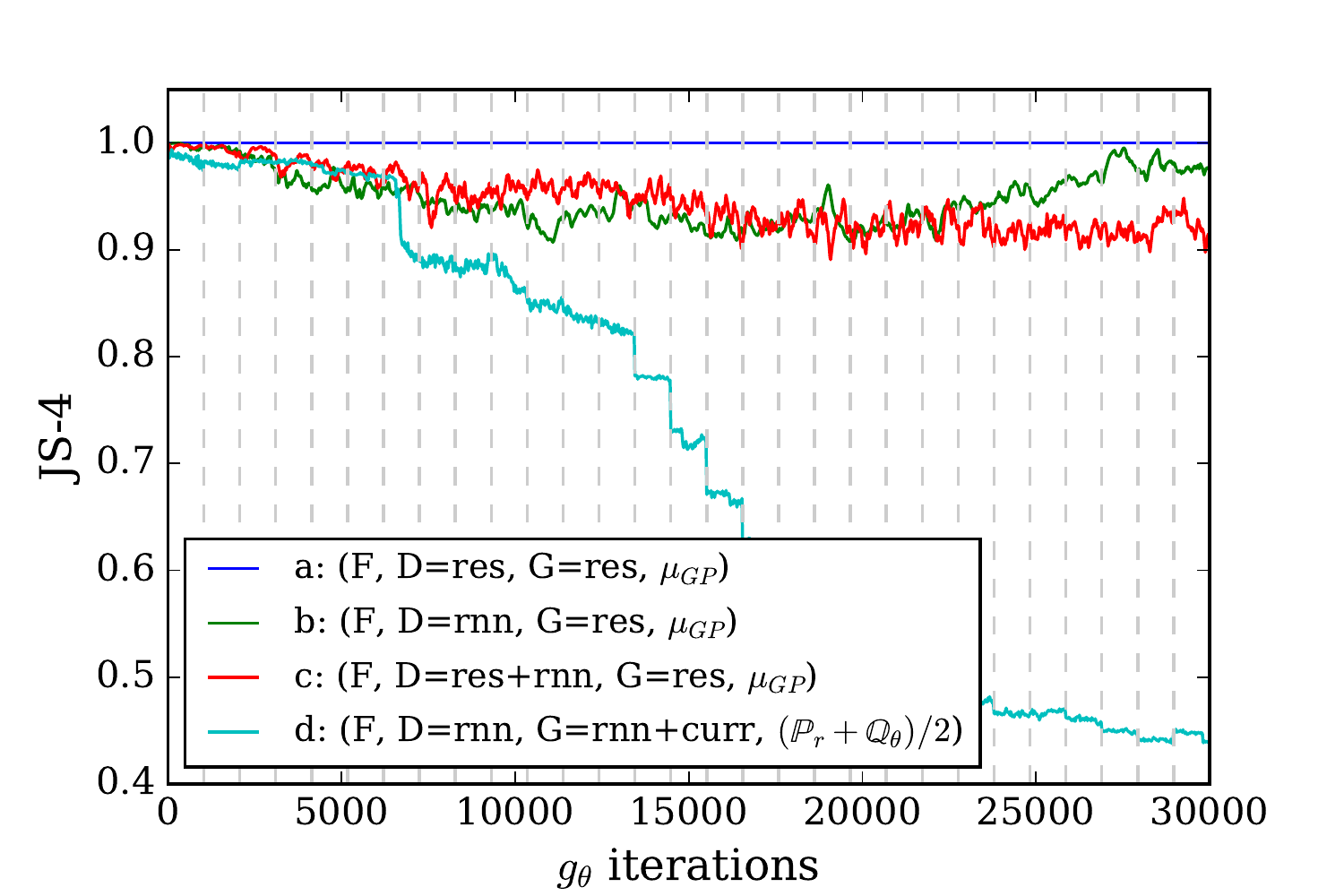}
\caption{Fisher GAN with different architectures for critics: (a-c) We see that for $\mu=\mu_{GP}$ and $G=$ Resnet for various critic architectures, Fisher GAN fails at the task of text generation. We notice small improvements for RNN critics (b-c) due to the conditioning and factoring of the distribution.
(d) Fisher GAN with recurrent generator and critic, trained on a curriculum conditioning for increasing lengths $\ell$, increments indicated by gridlines.
In this curriculum conditioning setup, with recurrent critics and generators, the training of Fisher GAN succeeds and reaches similar levels of Sobolev GAN (and WGAN-GP).
It is important to note that by doing this \emph{explicit curriculum conditioning} for Fisher GAN, we highlight the \emph{implicit conditioning} induced by Sobolev GAN, via the gradient regularizer.}
\label{fig:fisher}
\end{figure}

Finally Figure \ref{fig:fisher} (d) shows the result of training with recurrent generator and critic.
We follow \citep{press2017language} in terms of GRU architecture, but differ by using Fisher GAN rather than WGAN-GP.
We use $\mu=\frac{\mathbb{P}_r+\mathbb{Q}_{\theta}}{2}$ i.e. without annealed noise smoothing.
We train (F, D=RNN,G=RNN,$\frac{\mathbb{P}_r+\mathbb{Q}_{\theta}}{2}$) using curriculum conditioning of the generator for all lengths $\ell$ as done in \citep{press2017language}: the generator is conditioned on $32-\ell$ characters and predicts the $\ell$ remaining characters. We increment $\ell=1$ to $32$ on a regular schedule (every 15k updates).
JS-4 is only computed when $\ell > 4$.
We see in Figure \ref{fig:fisher} that under curriculum conditioning with recurrent critics and generators, the training of Fisher GAN succeeds and reaches similar levels of Sobolev GAN (and WGAN-GP). 
Note that the need of this \emph{explicit brute force conditioning} for Fisher GAN, highlights the \emph{implicit conditioning} induced by Sobolev GAN via the gradient regularizer, without the need for curriculum conditioning.

\subsection{Semi-Supervised Learning with Sobolev GAN}\label{sec:Sobolevssl}
A proper and promising framework for evaluating GANs consists in using it as a regularizer in the semi-supervised learning setting \citep{salimans2016improved,dumoulin2016adversarially,kumar2017improved}.
As mentioned before, the Sobolev norm as a regularizer for the Sobolev IPM draws connections with the Laplacian regularization in manifold learning \citep{Belkin2006}.
In the Laplacian framework of semi-supervised learning, the classifier satisfies a smoothness constraint imposed by controlling its Sobolev norm: $\int_{\pazocal{X}}\nor{\nabla_x f(x)}^2 \mu^2(x) dx$ \citep{AlaouiCRWJ16}.
In this Section, we present a variant of Sobolev GAN that achieves competitive performance in semi-supervised learning on the CIFAR-10 dataset \cite{cifar10} without using any internal activation normalization in the critic, such as batch normalization (BN) \citep{ioffe2015batch}, layer normalization (LN) \citep{ba2016layer}, or weight normalization \citep{salimans2016weight}.

In this setting, a convolutional neural network $\Phi_{\omega}:\pazocal{X}\to \mathbb{R}^m$ is shared between the cross entropy (CE) training of a $K$-class classifier ($S \in \mathbb{R}^{K\times m}$) and the critic of GAN (See Figure \ref{fig:sslcritic}).
We have the following training equations for the (critic + classifer) and the generator:
\begin{equation}
 \text{Critic + Classifier: } ~~\max_{S, \Phi_{\omega}, f } \pazocal{L}_{D}= \pazocal{L}^{\text{GAN}}_{\text{alm}} (f,g_\theta) - \lambda_{CE} \sum_{(x,y) \in \text{lab} } CE (p(y|x),y )
 \label{eq:critic+classifier}
\end{equation}
\begin{equation}
\text{Generator: } \max_{\theta } \pazocal{L}_{G}= \hat{\mathcal{E}}(f,g_{\theta}) 
\label{eq:Generator}
\end{equation}
where the main IPM objective with $N$ samples: $\hat{\mathcal{E}}(f,g_\theta) =\frac{1}{N} \left(\sum_{x \in \text{unl}} f(x) - \sum_{z \sim p_z} f(g_{\theta}(z)) \right)$.

\begin{figure}[b]
\centering
\includegraphics[width=0.75\linewidth]{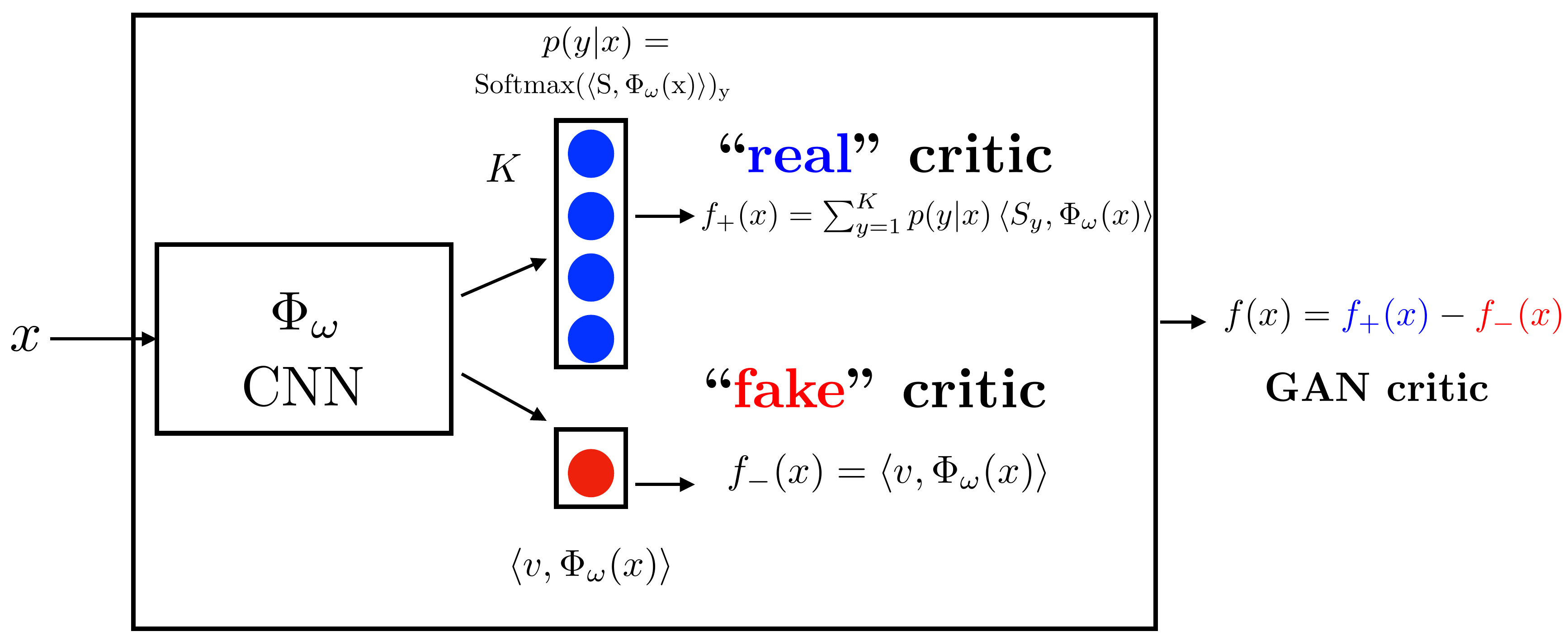}
\caption{``K+1'' parametrization of the critic for semi-supervised learning.}
\label{fig:sslcritic}
\end{figure}

Following \citep{mroueh2017fisher} we use the following ``$K+1$ parametrization" for the critic (See Figure \ref{fig:sslcritic}) : $$ f(x) =\underbrace{ \sum_{y=1}^K p(y|x)\scalT{S_y}{\Phi_{\omega}(x)}}_{\bm{f_{+}}: \text{ ``real" critic }} - \underbrace{\scalT{v}{\Phi_{\omega}(x)}}_{\bm{f_-} :\text{``fake" critic}}$$
Note that $p(y|x)=\rm{Softmax}(\scalT{S}{\Phi_{\omega}(x)})_y$ appears both in the critic formulation and in the Cross-Entropy term in Equation \eqref{eq:critic+classifier}.
Intuitively this critic uses the $K$ class directions of the classifier $S_y$ to define the ``real'' direction, which competes with another K+1\textsuperscript{th} direction $v$ that indicates fake samples.
This parametrization adapts the idea of \citep{salimans2016improved}, which was formulated specifically for the classic KL / JSD based GANs, to IPM-based GANs.
We saw consistently better results with the $K+1$ formulation over the regular formulation where the classification layer $S$ doesn't interact with the critic direction $v$.
We also note that when applying a gradient penalty based constraint (either WGAN-GP or Sobolev) on the full critic $f = f_+ - f_-$,
it is impossible for the network to fit even the small labeled training set (underfitting), causing bad SSL performance.
This leads us to the formulation below, where we apply the Sobolev constraint only on $\bm{f_-}$.
Throughout this Section we fix $\mu=\frac{\mathbb{P}_r+\mathbb{Q}_{\theta}}{2}$.

We propose the following two schemes for constraining the K+1 critic $f(x)=f_{+}(x)-f_{-}(x)$:

1) \textbf{Fisher constraint on the critic}: We restrict the critic to the following set:
$$f \in \left\{ f= f_{+}-f_{-}, ~~\hat{\Omega}_{F}(f,g_{\theta})= \frac{1}{2N} \left( \sum_{x \in \text{unl}} f^2(x) + \sum_{z \sim p_z} f^2(g_{\theta}(z)) \right)= 1 \right \}.$$
This constraint translates to the following ALM objective in Equation \eqref{eq:critic+classifier}:
\begin{equation*}
\pazocal{L}^{\text{GAN}}_{\text{alm}}(f,g_{\theta})= \hat{\mathcal{E}}(f,g_\theta)+ \lambda_F(1-\hat{\Omega}_{F}(f,g_{\theta}))-\frac{\rho_F}{2}(\hat{\Omega}_{F}(f,g_{\theta})-1)^2,
\end{equation*}
where the Fisher constraint ensures the stability of the training through an implicit whitened mean matching \citep{mroueh2017fisher}.

2) \textbf{Fisher+Sobolev constraint:} We impose 2 constraints on the critic: Fisher on $\bm{f}$ \& Sobolev on $\bm{f_{-}}$
\\
$$ f \in \left\{ f= f_{+}-f_{-}, ~~\hat{\Omega}_F (\bm{f},g_{\theta})= 1 \text{ and } \hat{\Omega}_S(\bm{f_{-}},g_{\theta})= 1 \right\},$$
where $\hat{\Omega}_S(\bm{f_{-}},g_{\theta})=\frac{1}{2N} \left( \sum_{x \in \text{unl}} \nor{\nabla_x \bm{f_{-}}(x)}^2 + \sum_{z \sim p_z} \nor{\nabla_x \bm{f_{-}}(g_{\theta}(z))}^2 \right)$.

This constraint translates to the following ALM in Equation \eqref{eq:critic+classifier}:
\begin{align*}
\pazocal{L}^{\text{GAN}}_{\text{alm}} (f,g_{\theta})&= \hat{\mathcal{E}}(f,g_\theta) + \lambda_{F}(1 - \hat{\Omega}_{F}(\bm{f},g_{\theta})) + \lambda_{S}(1 - \hat{\Omega}_{S}(\bm{f_{-}},g_{\theta}))\\&-\frac{\rho_{F}}{2}(\hat{\Omega}_{F}(\bm{f},g_{\theta})-1)^2-\frac{\rho_{S}}{2}(\hat{\Omega}_{S}(\bm{f_{-}},g_{\theta})-1)^2 .
\end{align*}
Note that the fisher constraint on $\bm{f}$ ensures the stability of the training, and the Sobolev constraints on the ``fake" critic $\bm{f_{-}}$ enforces smoothness of the ``fake'' critic and thus the shared CNN $\Phi_\omega(x)$.
This is related to the classic Laplacian regularization in semi-supervised learning \citep{Belkin2006}.

Table \ref{tab:cifar} shows results of SSL on CIFAR-10 comparing the two proposed formulations.
Similar to the standard procedure in other GAN papers, we do hyperparameter and model selection on the validation set.
We present baselines with a similar model architecture and leave out results with significantly larger convnets.
We indicate baselines with * which use either additional models like PixelCNN,
or do data augmentation (translations and flips), or use a much larger model,
either of which gives an advantage over our plain simple training method.
G and D architectures and hyperparameters are in Appendix \ref{app:ssl}.
$\Phi_\omega$ is similar to \citep{salimans2016improved,dumoulin2016adversarially,mroueh2017fisher} in architecture,
but note that we do not use any batch, layer, or weight normalization yet obtain strong competitive accuracies.
We hypothesize that we don't need any normalization in the critic, because of the implicit whitening of the feature maps introduced by the Fisher and Sobolev constraints as explained in \citep{mroueh2017fisher}.

\begin{table}[ht!]
\vskip -0.13in
\centering
\caption{
CIFAR-10 error rates for varying number of labeled samples in the training set.
Mean and standard deviation computed over 5 runs.
We only use the $K+1$ formulation of the critic.
Note that we achieve strong SSL performance without any additional tricks,
and even though the critic does not have any batch, layer or weight normalization.
\label{tab:cifar} }
\resizebox{\textwidth}{!}{
\begin{tabular}{@{}lllll@{}} \toprule
Number of labeled examples & 1000 & 2000 & 4000 & 8000 \\
Model & \multicolumn{4}{c}{Misclassification rate} \\ \midrule
CatGAN \citep{springenberg2015unsupervised}    &              &      & $19.58$ &   \\
FM \citep{salimans2016improved} & $21.83 \pm 2.01$ & $19.61 \pm 2.09$ & $18.63 \pm 2.32$ & $17.72 \pm 1.82$ \\
ALI \citep{dumoulin2016adversarially}          & $19.98 \pm 0.3$ & $19.09 \pm 0.15$ & $17.99 \pm 0.54$ & $17.05 \pm 0.50$ \\
Tangents Reg \citep{kumar2017improved} & $20.06 \pm 0.5$    &        & $16.78 \pm 0.6$  &  \\
$\Pi$-model \citep{laine2016temporal} * &                   &        & $16.55 \pm 0.29$  &  \\
VAT  \citep{miyato2017virtual}        &                     &        & $14.87 $          &  \\
Bad Gan \citep{dai2017good} *         &                     &        & $14.41 \pm 0.30$ &  \\
VAT+EntMin+Large \citep{miyato2017virtual} * &                &      & $13.15 $          &  \\
Sajjadi \citep{sajjadi2016regularization} * &               &        & $11.29 $          &  \\
\midrule
\midrule
Fisher, layer norm \citep{mroueh2017fisher} & $19.74 \pm 0.21$ &  $17.87 \pm 0.38$ &  $16.13 \pm 0.53$ &  $14.81 \pm 0.16$ \\
Fisher, no norm \citep{mroueh2017fisher} & $21.49 \pm 0.18$ &  $19.20 \pm 0.46$ &  $17.30 \pm 0.30$ &  $15.57 \pm 0.33$ \\
\midrule
Fisher+Sobolev, no norm (This Work) & $20.14 \pm 0.21$ &  $17.38 \pm 0.10$ &  $15.77 \pm 0.19$ &  $14.20 \pm 0.08$ \\
\bottomrule
\end{tabular} }
\end{table}

\section{Conclusion} 
We introduced the Sobolev IPM and showed that it amounts to a comparison between weighted (coordinate-wise) CDFs.
We presented an ALM algorithm for training Sobolev GAN. The intrinsic conditioning implied by the Sobolev IPM explains the success of gradient regularization in Sobolev GAN and WGAN-GP on discrete sequence data, and particularly in text generation.
We highlighted the important tradeoffs between the implicit conditioning introduced by the gradient regularizer in Sobolev IPM, and the explicit conditioning of Fisher IPM via recurrent critics and generators in conjunction with the curriculum conditioning.
Both approaches succeed in text generation.
We showed that Sobolev GAN achieves competitive semi-supervised learning results without the need of any normalization, thanks to the smoothness induced by the gradient regularizer. 
We think the Sobolev IPM point of view will open the door for designing new regularizers that induce different types of conditioning for general structured/discrete/graph data beyond sequences.

\bibliography{refs,simplex}
\bibliographystyle{iclr2018_conference}

\newpage
\appendix
\section{Theory: Approximation and Transport Interpretation} \label{app:Theory}
In this Section we present the theoretical properties of Sobolev IPM and how it relates to distributions transport theory and other known metrics between distributions, notably the Stein distance.
\subsection{ Approximating Sobolev IPM in a Hypothesis class }

Learning in the whole Sobolev space $W^{1,2}_0$ is intractable hence we need to restrict our function class to a hypothesis class $\mathcal{H}$, such as neural networks. We assume in the following that functions in $\mathcal{H}$ vanish on the boundary of $\pazocal{X}$, and restrict the optimization to the function space $\mathcal{H}$. $\mathcal{H}$ can be a Reproducing Kernel Hilbert Space as in the MMD case or parametrized by a neural network. 
Define:
\begin{equation}
\pazocal{S}_{\mathcal{H},\mu}(\mathbb{P},\mathbb{Q})= \sup_{f \in {\mathcal{H}}, \nor{f}_{W^{1,2}_0}\leq 1}\Big\{ \mathbb{E}_{x\sim \mathbb{P}}f(x)-\mathbb{E}_{x\sim \mathbb{Q}}f(x)\Big \}
\label{eq:SobolevH}
\end{equation}
The following Lemma shows that the relative approximation of the Sobolev IPM in a function space $\mathcal{H}$ (whose functions vanish at the boundary) is proportional to the approximation of the optimal Sobolev Critic $f^*$ in $\mathcal{H}$. This approximation error is measured in the sense of the Sobolev norm. \\

\begin{lemma}[Sobolev IPM Approximation in a Hypothesis Class] Let $\mathcal{H}$ be a function space with function vanishing at the boundary. For any $f\in \mathcal{H}$ and for $f^*$ the optimal critic in $W^{1,2}_0$, we have:
$$\pazocal{S}_{\mathcal{H},\mu}(\mathbb{P},\mathbb{Q})= \pazocal{S}_{\mu}(\mathbb{P},\mathbb{Q}) \sup_{f \in \mathcal{H}, \nor{f}_{W^{1,2}_0(\pazocal{X},\mu)}\leq 1}\scalT{f}{f^*}_{W^{1,2}_0(\pazocal{X},\mu)}.$$
\label{lem:lemma1}
\end{lemma}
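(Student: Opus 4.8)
The plan is to view the restricted problem \eqref{eq:SobolevH} as an inner-product maximization in the Hilbert space $(W^{1,2}_0(\pazocal{X},\mu),\scalT{\cdot}{\cdot}_{W^{1,2}_0(\pazocal{X},\mu)})$, exactly mirroring the Fisher argument in Theorem \ref{theo:Fisher} but with the Sobolev dot product in place of the $\mathcal{L}_2(\pazocal{X},\mu)$ one. First I would rewrite the linear functional $f\mapsto \mathbb{E}_{x\sim\mathbb{P}}f(x)-\mathbb{E}_{x\sim\mathbb{Q}}f(x)$ appearing in \eqref{eq:SobolevH} as a Sobolev inner product with the \emph{unrestricted} optimal critic. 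Concretely, from Theorem \ref{theo:Sobolev}(3) we know $\nabla_x f^*(x)=\frac{1}{d\,\pazocal{S}_{\mu}(\mathbb{P},\mathbb{Q})}\frac{D^-F_{\mathbb{Q}}(x)-D^-F_{\mathbb{P}}(x)}{\mu(x)}$, and the computation underlying part (1) of that theorem shows that for any $f$ vanishing at the boundary, integration by parts gives $\mathbb{E}_{x\sim\mathbb{P}}f(x)-\mathbb{E}_{x\sim\mathbb{Q}}f(x)=\int_{\pazocal{X}}\scalT{\nabla_x f(x)}{\frac{D^-F_{\mathbb{Q}}(x)-D^-F_{\mathbb{P}}(x)}{d\,\mu(x)}}\mu(x)dx=\pazocal{S}_{\mu}(\mathbb{P},\mathbb{Q})\,\scalT{f}{f^*}_{W^{1,2}_0(\pazocal{X},\mu)}$, where the last equality just substitutes the expression for $\nabla_x f^*$. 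This identity is the crux: it turns the mean discrepancy into a scalar multiple of a Sobolev inner product against the fixed element $f^*$.

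Given this, the rest is immediate: taking the supremum over $f\in\mathcal{H}$ with $\nor{f}_{W^{1,2}_0(\pazocal{X},\mu)}\leq 1$ on both sides of $\mathbb{E}_{x\sim\mathbb{P}}f(x)-\mathbb{E}_{x\sim\mathbb{Q}}f(x)=\pazocal{S}_{\mu}(\mathbb{P},\mathbb{Q})\,\scalT{f}{f^*}_{W^{1,2}_0(\pazocal{X},\mu)}$ yields
\[
\pazocal{S}_{\mathcal{H},\mu}(\mathbb{P},\mathbb{Q})=\pazocal{S}_{\mu}(\mathbb{P},\mathbb{Q})\sup_{f\in\mathcal{H},\,\nor{f}_{W^{1,2}_0(\pazocal{X},\mu)}\leq 1}\scalT{f}{f^*}_{W^{1,2}_0(\pazocal{X},\mu)},
\]
which is exactly the claimed identity, since $\pazocal{S}_{\mu}(\mathbb{P},\mathbb{Q})\geq 0$ is a constant that pulls out of the supremum. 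I would also remark (to match the surrounding discussion) that the supremum on the right-hand side is the cosine of the angle between $\mathcal{H}$ and the line spanned by $f^*$ in the Sobolev geometry, so it equals $1$ exactly when $f^*\in\mathcal{H}$, recovering the exactness statement advertised before the lemma.

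The main obstacle is the integration-by-parts / boundary-term step: I need the hypothesis that functions in $\mathcal{H}$ (and $f^*$) vanish on $\partial\pazocal{X}$ together with sufficient regularity of $F_{\mathbb{P}},F_{\mathbb{Q}}$ (assumed $C^d$ in Theorem \ref{theo:Sobolev}) so that the repeated integration by parts transferring all $d-1$ "missing" derivatives $D^{-i}$ off $F_{\mathbb{P}}-F_{\mathbb{Q}}$ onto $f$ produces no boundary contributions and is well-defined. This is precisely the manipulation already carried out in the proof of Theorem \ref{theo:Sobolev} (deferred to Appendix \ref{app:proofs}), so in the write-up I would cite that computation rather than redo it, and the lemma's proof reduces to the two-line Cauchy–Schwarz-style argument above. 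A minor point to be careful about: $\nor{\cdot}_{W^{1,2}_0(\pazocal{X},\mu)}$ is only a semi-norm in general, but on the space of boundary-vanishing functions it is a genuine norm (by a Poincaré-type argument), so the Hilbert-space language is legitimate; alternatively the whole argument goes through purely formally at the level of the bilinear form without needing definiteness.
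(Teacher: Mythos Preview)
Your proposal is correct and follows essentially the same route as the paper: rewrite the mean discrepancy via the integration-by-parts identity from the proof of Theorem \ref{theo:Sobolev} as $\pazocal{S}_{\mu}(\mathbb{P},\mathbb{Q})\,\scalT{f}{f^*}_{W^{1,2}_0(\pazocal{X},\mu)}$, then take the supremum over the constrained class $\mathcal{H}$. The paper's proof is exactly this two-line argument, without the additional remarks you include about the cosine interpretation and the semi-norm issue.
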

Note that this Lemma means that the Sobolev IPM is well approximated if the space $\mathcal{H}$ has an enough representation power to express $\nabla_xf^*(x)$.
This is parallel to the Fisher IPM approximation \citep{mroueh2017fisher} where it is shown that the Fisher IPM approximation error is proportional to the critic approximation in the Lebesgue sense:
$$\mathcal{F}_{\mathcal{H},\mu}(\mathbb{P},\mathbb{Q})= \mathcal{F}_{\mu}(\mathbb{P},\mathbb{Q}) \sup_{f \in \mathcal{H}, \nor{f}_{\mathcal{L}_2(\pazocal{X},\mu)}\leq 1}\scalT{f}{f_{\chi}}_{\mathcal{L}_2(\pazocal{X},\mu)}.$$
\subsection{Distribution Transport Perspective on Sobolev IPM } \label{sec:Transport}
In this Section, we characterize the optimal critic of the Sobolev IPM as a solution of a non linear PDE. The solution of the variational problem of the Sobolev IPM satisfies a non linear PDE that can be derived using standard tools from calculus of variations \citep{convexbook,AlaouiCRWJ16}.
\begin{theorem}[PDE satisfied by the Sobolev Critic]

The optimal critic of Sobolev IPM $f^*$ satisfies the following PDE:
\begin{equation}
 \Delta f^*(x) + \scalT{\nabla_x \log \mu (x)}{ \nabla_x f^*(x)}+ \frac{ \mathbb{P}(x)-\mathbb{Q}(x)}{\pazocal{S}_{\mu}(\mathbb{P},\mathbb{Q}) \mu (x)} =0.
 \label{eq:PDEsobo}
\end{equation}
\label{theo:pde}
\end{theorem}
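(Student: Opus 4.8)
The plan is to derive the PDE as the Euler--Lagrange equation of the constrained variational problem \eqref{eq:Sobolev}, and then rewrite it in the stated ``Fokker--Planck'' form. First I would introduce a Lagrange multiplier $\lambda$ for the norm constraint $\nor{f}^2_{W^{1,2}_0(\pazocal{X},\mu)}\leq 1$ and consider the functional
\[
J(f) = \int_{\pazocal{X}} f(x)(\mathbb{P}(x)-\mathbb{Q}(x))\, dx - \frac{\lambda}{2}\left(\int_{\pazocal{X}} \nor{\nabla_x f(x)}^2 \mu(x)\, dx - 1\right).
\]
Computing the first variation $\frac{d}{dt}J(f+t h)\big|_{t=0}$ for a smooth test function $h$ vanishing on $\partial\pazocal{X}$ gives
\[
\int_{\pazocal{X}} h(x)(\mathbb{P}(x)-\mathbb{Q}(x))\, dx - \lambda \int_{\pazocal{X}} \scalT{\nabla_x f(x)}{\nabla_x h(x)}\mu(x)\, dx = 0 .
\]
Integrating the second term by parts (the boundary term drops since $h$ vanishes on $\partial\pazocal{X}$) yields $\int_{\pazocal{X}} h(x)\big[(\mathbb{P}(x)-\mathbb{Q}(x)) + \lambda\, \mathrm{div}(\mu(x)\nabla_x f(x))\big]\, dx = 0$ for all such $h$, hence the pointwise identity $\mathbb{P}(x)-\mathbb{Q}(x) = -\lambda\, \mathrm{div}(\mu(x)\nabla_x f^*(x))$, i.e. Equation \eqref{eq:fokkerPlanckpde} up to identifying the constant.

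Next I would identify the multiplier $\lambda$. Testing the weak form against $f^*$ itself gives $\mathbb{E}_{x\sim\mathbb{P}}f^*(x) - \mathbb{E}_{x\sim\mathbb{Q}}f^*(x) = \lambda \nor{f^*}^2_{W^{1,2}_0(\pazocal{X},\mu)} = \lambda$, since the optimal critic saturates the constraint; but the left side is precisely $\pazocal{S}_{\mu}(\mathbb{P},\mathbb{Q})$, so $\lambda = \pazocal{S}_{\mu}(\mathbb{P},\mathbb{Q})$. (Alternatively this follows directly from Theorem \ref{theo:Sobolev}, part 3, by taking the divergence of the displayed identity for $\nabla_x f^*$ and using $\mathrm{div}(D^- F_{\mathbb{P}}) = \mathbb{P}$, etc., which is the cleanest route and avoids re-deriving optimality.) Substituting back gives
\[
\frac{\mathbb{P}(x)-\mathbb{Q}(x)}{\pazocal{S}_{\mu}(\mathbb{P},\mathbb{Q})} = -\mathrm{div}(\mu(x)\nabla_x f^*(x)) .
\]
Finally I would expand the divergence using the product rule, $\mathrm{div}(\mu\nabla f^*) = \mu\,\Delta f^* + \scalT{\nabla\mu}{\nabla f^*}$, divide through by $\mu(x) > 0$, and use $\nabla_x\mu/\mu = \nabla_x\log\mu$ to obtain
\[
\Delta f^*(x) + \scalT{\nabla_x\log\mu(x)}{\nabla_x f^*(x)} + \frac{\mathbb{P}(x)-\mathbb{Q}(x)}{\pazocal{S}_{\mu}(\mathbb{P},\mathbb{Q})\,\mu(x)} = 0,
\]
which is \eqref{eq:PDEsobo}.

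The main obstacle is rigor rather than the formal calculation: one must justify that the supremum in \eqref{eq:Sobolev} is attained (so that $f^*$ exists), that $f^*$ has enough regularity ($f^*\in C^2$, say, or the statement must be read in the weak/distributional sense) for the pointwise division by $\mu$ to make sense, and that the annihilation of $\int h\,(\cdots)\,dx$ for all admissible $h$ genuinely forces the bracket to vanish a.e.\ (a density/fundamental-lemma-of-calculus-of-variations argument). Existence and uniqueness can be obtained from Lax--Milgram applied to the bilinear form $a(f,g) = \int \scalT{\nabla f}{\nabla g}\mu\,dx$ on $W^{1,2}_0(\pazocal{X},\mu)$, which is coercive because the Sobolev semi-norm is a genuine norm on functions vanishing at the boundary (Poincaré inequality on the compact domain $\pazocal{X}$); elliptic regularity then lifts $f^*$ to $C^2$ under the stated smoothness of $F_{\mathbb{P}}, F_{\mathbb{Q}}$. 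I would state these as standing assumptions and cite \citep{convexbook,AlaouiCRWJ16} for the calculus-of-variations machinery, keeping the displayed derivation at the formal level that matches the rest of the paper.
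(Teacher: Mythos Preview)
Your proposal is correct and follows essentially the same approach as the paper: introduce a Lagrange multiplier, compute the first variation, integrate by parts using the zero boundary condition, obtain the divergence-form equation, expand $\mathrm{div}(\mu\nabla f^*)=\mu\Delta f^*+\scalT{\nabla\mu}{\nabla f^*}$, and divide by $\mu$. The only minor difference is in identifying $\lambda$: you test the weak form against $f^*$ to get $\lambda=\pazocal{S}_\mu(\mathbb{P},\mathbb{Q})$ directly, whereas the paper plugs the explicit gradient formula from Theorem~\ref{theo:Sobolev} back into the PDE to verify consistency and then reads off $\lambda^*$ from the unit-norm constraint; both are valid, and your route is arguably cleaner.
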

Define the Stein Operator:
$T(\mu)\vec{g}(x)=\frac{1}{2}\Big(\scalT{ \nabla_x \log(\mu(x))}{\vec{g}(x)}+ div(\vec{g}(x))\Big)$.
Hence we have the following Transport Equation of $\mathbb{P}$ to $\mathbb{Q}$:
$$\mathbb{Q}(x)=\mathbb{P}(x)+ 2\pazocal{S}_{\mu}(\mathbb{P},\mathbb{Q}) \mu(x)T(\mu) \nabla_x f^*(x).$$
\noindent Recall the definition of Stein Discrepancy :
$$\mathbb{S}(\mathbb{Q},\mu)=\sup_{\vec{g}} \left| \mathbb{E}_{x\sim \mathbb{Q}}\left[T(\mu)\vec{g}(x)\right]\right|, \vec{g}:\pazocal{X}\to \mathbb{R}^d.$$
\begin{theorem}[Sobolev and Stein Discrepanices]
The following inequality holds true:
\begin{equation}
 \left| \mathbb{E}_{x\sim \mathbb{Q}} \left[\frac{\mathbb{Q}(x)-\mathbb{P}(x)}{\mu(x)}\right]\right| \leq 2 \underbrace{ \mathbb{S}(\mathbb{Q},\mu)}_{ \text{Stein Good fitness of the model } \mathbb{Q} \text{ w.r.t to } \mu} \underbrace{ \pazocal{S}_{\mu}(\mathbb{P},\mathbb{Q})}_{~~~\text{Sobolev Distance}}
\end{equation}

\label{corr:Stein}
\end{theorem}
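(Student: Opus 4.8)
The plan is to derive the inequality directly from the two identities already established: the transport equation relating $\mathbb{P}$ and $\mathbb{Q}$ through the optimal Sobolev critic $f^*$, and the definition of the Stein discrepancy as a supremum over vector fields. First I would start from the transport equation
$$\mathbb{Q}(x)=\mathbb{P}(x)+2\,\pazocal{S}_{\mu}(\mathbb{P},\mathbb{Q})\,\mu(x)\,T(\mu)\nabla_x f^*(x),$$
rearrange it to isolate the density ratio,
$$\frac{\mathbb{Q}(x)-\mathbb{P}(x)}{\mu(x)}=2\,\pazocal{S}_{\mu}(\mathbb{P},\mathbb{Q})\,T(\mu)\nabla_x f^*(x),$$
and then take expectation with respect to $\mathbb{Q}$ and absolute values:
$$\left|\mathbb{E}_{x\sim\mathbb{Q}}\left[\frac{\mathbb{Q}(x)-\mathbb{P}(x)}{\mu(x)}\right]\right|=2\,\pazocal{S}_{\mu}(\mathbb{P},\mathbb{Q})\,\left|\mathbb{E}_{x\sim\mathbb{Q}}\left[T(\mu)\nabla_x f^*(x)\right]\right|.$$

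Next I would bound the factor $\left|\mathbb{E}_{x\sim\mathbb{Q}}[T(\mu)\nabla_x f^*(x)]\right|$ by the Stein discrepancy. Since $\nabla_x f^*$ is itself a particular vector field $\vec g:\pazocal{X}\to\mathbb{R}^d$ (one needs here that $f^*\in W^{1,2}_0$ makes $\nabla_x f^*$ an admissible test field, in particular with the zero boundary condition that the Stein class requires), it is dominated by the supremum in the definition of $\mathbb{S}(\mathbb{Q},\mu)$:
$$\left|\mathbb{E}_{x\sim\mathbb{Q}}\left[T(\mu)\nabla_x f^*(x)\right]\right|\leq \sup_{\vec g}\left|\mathbb{E}_{x\sim\mathbb{Q}}\left[T(\mu)\vec g(x)\right]\right|=\mathbb{S}(\mathbb{Q},\mu).$$
Substituting this into the equality above yields
$$\left|\mathbb{E}_{x\sim\mathbb{Q}}\left[\frac{\mathbb{Q}(x)-\mathbb{P}(x)}{\mu(x)}\right]\right|\leq 2\,\mathbb{S}(\mathbb{Q},\mu)\,\pazocal{S}_{\mu}(\mathbb{P},\mathbb{Q}),$$
which is exactly the claimed bound.

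The main obstacle is not the chain of inequalities, which is essentially one line, but justifying that $\nabla_x f^*$ legitimately lies in whatever function class the Stein discrepancy is taken over: the Stein operator identity $\mathbb{E}_{x\sim\mu}[T(\mu)\vec g(x)]=0$ and the interpretation of $\mathbb{S}(\mathbb{Q},\mu)$ as measuring goodness of fit both rely on integration-by-parts with vanishing boundary terms, so I would need the optimal critic's regularity (from Theorem \ref{theo:Sobolev} and Theorem \ref{theo:pde}, $f^*$ solves an elliptic PDE on compact $\pazocal{X}$ with zero boundary conditions) to ensure $\nabla_x f^*$ decays appropriately at $\partial\pazocal{X}$ and that all the integrals converge. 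A secondary point worth a remark is that the transport equation used as the starting point was itself obtained from the PDE of Theorem \ref{theo:pde} by rewriting $\Delta f^* + \scalT{\nabla_x\log\mu}{\nabla_x f^*}=\frac{1}{\mu}\,\mathrm{div}(\mu\nabla_x f^*)=2T(\mu)\nabla_x f^*$, so the whole argument ultimately rests on that elliptic characterization being valid $\mu$-almost everywhere.
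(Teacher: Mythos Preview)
Your proposal is correct and follows essentially the same approach as the paper: the transport equation you start from is just the PDE of Theorem~\ref{theo:pde} rewritten in terms of the Stein operator (as you yourself note), and from there both you and the paper take $\mathbb{Q}$-expectations, absolute values, and bound $\left|\mathbb{E}_{x\sim\mathbb{Q}}[T(\mu)\nabla_x f^*(x)]\right|$ by the Stein discrepancy $\mathbb{S}(\mathbb{Q},\mu)$ using that $\nabla_x f^*$ is an admissible test field. Your discussion of the regularity caveat (that $\nabla_x f^*$ must lie in the Stein class) is a welcome addition that the paper leaves implicit.
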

Consider for example $\mu=\mathbb{P}$, and sequence $\mathbb{Q}_n$. If the Sobolev distance goes $\pazocal{S}_{\mathbb{P}}(\mathbb{P},\mathbb{Q}_n) \to 0$, the ratio $r_n(x)=\frac{\mathbb{Q}_n(x)}{\mathbb{P}(x)}$ converges in expectation (w.r.t to $\mathbb{Q}$)
to $1$. The speed of the convergence is given by the Stein Discrepancy $\mathbb{S}(\mathbb{Q}_n,\mathbb{P})$. 

\paragraph{Relation to Fokker-Planck Diffusion Equation and Particles dynamics.} Note that PDE satisifed by the Sobolev critic given in Equation \eqref{eq:PDEsobo} can be equivalently written:
\begin{equation}
 \frac{\mathbb{P}-\mathbb{Q}}{\pazocal{S}_{\mu}(\mathbb{P},\mathbb{Q})}=- \text{div} (\mu(x)\nabla_x f^*(x)),
 \label{eq:fokkerPlanckForm}
\end{equation}
written in this form, we draw a connection with the Fokker-Planck Equation for the evolution of a density function $q_t$ that is the density of particles $X_t \in \mathbb{R}^d$ evolving with a drift (a velocity field) $V(x,t): \pazocal{X}\times [0,\infty[\to \mathbb{R}^d$:
$$dX_t = V(X_t,t)dt, \text{where the density of } X_0 \text{ is given by } q_0(x)=\mathbb{Q}(x), $$
The Fokker-Planck Equation states that the evolution of the particles density $q_t$ satisfies:
\begin{equation}
\frac{d q_t}{dt}(x)= -\text{div} (q_t(x) V(x,t))
\label{eq:FokkerPlanck}
\end{equation}
Comparing Equation \eqref{eq:fokkerPlanckForm} and Equation \eqref{eq:FokkerPlanck}, we identify then the gradient of Sobolev critic as a drift. This suggests that one can define ``Sobolev descent" as the evolution of particles along the gradient flow:
$$dX_t = \nabla_x f^*_t(X_t) dt, \text{where the density of } X_0 \text{ is given by } q_0(x)=\mathbb{Q}(x), $$
where $f^*_t$ is the Sobolev critic between $q_t$ and $\mathbb{P}$. One can show that the limit distribution of the particles is $\mathbb{P}$. The analysis of ``Sobolev descent" and its relation to Stein Descent \citep{steindescent,Stein} is beyond the scope of this paper and will be studied in a separate work. Hence we see that the gradient of the Sobolev critic defines a transportation plan to move particles whose distribution is $\mathbb{Q}$ to particles whose distribution is $\mathbb{P}$ (See Figure \ref{fig:Transport}). This highlights the role of the gradient of the critic in the context of GAN training in term of transporting the distribution of the generator to the real distribution.

\section{Proofs}\label{app:proofs}
\begin{proof}[Proof of Theorem \ref{theo:Sobolev}]
\noindent Let $F_{\mathbb{P}}$ and $F_{\mathbb{Q}}$, be \emph{the cumulative distribution functions} of $\mathbb{P}$ and $\mathbb{Q}$ respectively. We have:
\begin{equation}
\mathbb{P}(x)= \frac{\partial^d}{\partial x_1\dots \partial x_d}F_{\mathbb{P}}(x),
\end{equation}
We note $D=\frac{\partial^d}{\partial x_1\dots \partial x_d}$, and $D^{-i}=\frac{\partial^{d-1}}{\partial x_1\dots \partial x_{i-1}\partial x_{i+1}\dots \partial x_d},$ for $i=1\dots d$. \\

\noindent $D^{-i}$ computes the $d-1$ partial derivative excluding the variable $i$.\\
\noindent In the following we assume that $F_{\mathbb{P}},$ and $F_{\mathbb{Q}}$ and its $d$ derivatives exist and are continuous meaning that $F_{\mathbb{P}}$ and $F_{\mathbb{Q}} \in C^d(\pazocal{X})$. The objective function in Equation \eqref{eq:Sobolev}
can be written as follows:
\begin{align*}
 \mathbb{E}_{x\sim \mathbb{P}}f(x)-\mathbb{E}_{x\sim \mathbb{Q}}f(x)&= \int_{\pazocal{X}} f(x) D\Big(F_{\mathbb{P}}(x)-F_{\mathbb{Q}}(x)\Big) dx\\
 &= \int_{\pazocal{X}} f(x)\frac{\partial}{\partial x_i} D^{-i}(F_{\mathbb{P}}(x)-F_{\mathbb{Q}}(x)) dx\\
& \text{ ~ (for any $i$, since $F_{\mathbb{P}}$ and $F_{\mathbb{Q}} \in C^d(\pazocal{X})$)}\\
 &= - \int_{\pazocal{X}} \frac{\partial f}{\partial x_i} D^{-i}(F_{\mathbb{P}}(x)-F_{\mathbb{Q}}(x)) dx\\ &\text{ ($f$ vanishes at the boundary in $W^{1,2}_0(\pazocal{X},\mu)$ )}
\end{align*}
\noindent Let $D^{-}=(D^{-1},\dots,D^{-d})$ it follows that:
\begin{eqnarray}
\mathbb{E}_{x\sim \mathbb{P}}f(x)-\mathbb{E}_{x\sim \mathbb{Q}}f(x)&=& \frac{1}{d} \sum_{i=1}^d \int_{\pazocal{X}} \frac{\partial f}{\partial x_i} D^{-i}(F_{\mathbb{Q}}(x)-F_{\mathbb{P}}(x)) dx \nonumber\\
&=&\frac{1}{d} \int_{\pazocal{X}} \scalT{\nabla_x f(x)}{D^- (F_{\mathbb{Q}}(x)-F_{\mathbb{P}}(x)) }_{\mathbb{R}^d}dx
\label{eq:sobolebHyperplane}
\end{eqnarray}

\noindent Let us define $\mathcal{L}_2(\pazocal{X},\mu)^{\otimes d}$ the space of measurable functions from $\pazocal{X}\to \mathbb{R}^d$. For $g,h \in \mathcal{L}_2(\pazocal{X},\mu)^{\otimes d}$ the dot product is defined as follows:
$$\scalT{g}{h}_{\mathcal{L}_2(\pazocal{X},\mu)^{\otimes d}}=\int_{\pazocal{X}} \scalT{g(x)}{h(x)}_{\mathbb{R}^d}\mu(x) dx$$
and the norm is given :
$$\nor{g}_{\mathcal{L}_2(\pazocal{X},\mu)^{\otimes d}}=\int_{X}\nor{g}^2_{\mathbb{R}^d}\mu(x)dx.$$
We can write the objective in Equation \eqref{eq:sobolebHyperplane} in term of the dot product in $\mathcal{L}_2(\pazocal{X},\mu)^{\otimes d}$ :
\begin{equation}
\mathbb{E}_{x\sim \mathbb{P}}f(x)-\mathbb{E}_{x\sim \mathbb{Q}}f(x)=\frac{1}{d}\scalT{\nabla_x f }{\frac{D^- (F_{\mathbb{Q}}-F_{\mathbb{P}})}{\mu} }_{\mathcal{L}_2(\pazocal{X},\mu)^{\otimes d}}.
\label{eq:objective}
\end{equation} 
On the other hand the constraint in Equation \eqref{eq:Sobolev} can be written in terms of the norm in $\mathcal{L}_2(\pazocal{X},\mu)^{\otimes d}$:
\begin{equation}
\nor{f}_{W^{1,2}_{0}(\pazocal{X},\mu)}=\nor{\nabla_x f}_{\mathcal{L}_2(\pazocal{X},\mu)^{\otimes d}}
\label{eq:constraint}
\end{equation}
Replacing the objective and constraint given in Equations \eqref{eq:objective} and \eqref{eq:constraint} in Equation \eqref{eq:Sobolev}, we obtain:
\begin{align*}
\pazocal{S}(\mathbb{P},\mathbb{Q})&=\frac{1}{d} \sup_{f, \nor{\nabla_x f}_{\mathcal{L}_2(\pazocal{X},\mu)^{\otimes d}}\leq 1} \scalT{\nabla_x f }{\frac{D^- (F_{\mathbb{Q}}-F_{\mathbb{P}})}{\mu} }_{\mathcal{L}_2(\pazocal{X},\mu)^{\otimes d}}\\
&=\frac{1}{d}\sup_{g \in \mathcal{L}_2(\pazocal{X},\mu)^{\otimes d}, \nor{g}_{\mathcal{L}_2(\pazocal{X},\mu)^{\otimes d}}\leq 1 } \scalT{g}{\frac{D^- (F_{\mathbb{Q}}-F_{\mathbb{P}})}{\mu} }_{\mathcal{L}_2(\pazocal{X},\mu)^{\otimes d}}\\
&= \frac{1}{d}\nor{\frac{D^- (F_{\mathbb{Q}}-F_{\mathbb{P}})}{\mu}}_{\mathcal{L}_2(\pazocal{X},\mu)^{\otimes d}}\\
& \left(\text{By definition of } \nor{.}_{\mathcal{L}_2(\pazocal{X},\mu)^{\otimes d}}, g^*= \frac{ D^- F_{\mathbb{Q}}(x)- D^-F_{\mathbb{P}}(x) }{\mu(x)} \frac{1}{\nor{\frac{D^- (F_{\mathbb{Q}}-F_{\mathbb{P}})}{\mu}}_{\mathcal{L}_2(\pazocal{X},\mu)^{\otimes d}}}\right)\\
&= \frac{1}{d}\sqrt{\int_{\pazocal{X}} \frac{\nor{D^- F_{\mathbb{Q}}(x)- D^-F_{\mathbb{P}}(x)}^2}{\mu(x)} dx}.
\label{eq:Sobolev}
\end{align*}
Hence we find also that the optimal critic $f^*$ satisfies:
\begin{eqnarray*}
 \nabla_x f^*(x)=\frac{ D^- F_{\mathbb{Q}}(x)- D^-F_{\mathbb{P}}(x) }{\mu(x)} \frac{1}{\nor{\frac{D^- (F_{\mathbb{Q}}-F_{\mathbb{P}})}{\mu}}_{\mathcal{L}_2(\pazocal{X},\mu)^{\otimes d}}}.
 \end{eqnarray*}
\end{proof}

\begin{proof}[Proof of Lemma \ref{lem:lemma1}]
\begin{eqnarray*}
\mathbb{E}_{x\sim \mathbb{P}}f(x)-\mathbb{E}_{x\sim \mathbb{Q}}f(x)&=&\frac{1}{d} \int_{\pazocal{X}} \scalT{\nabla_x f(x)}{D^- (F_{\mathbb{Q}}(x)-F_{\mathbb{P}}(x)) }_{\mathbb{R}^d}dx\\
&=& \pazocal{S}_{\mu}(\mathbb{P},\mathbb{Q}) \int_{\pazocal{X}} \scalT{\nabla_x f(x)}{\frac{D^- (F_{\mathbb{Q}}(x)-F_{\mathbb{P}}(x))}{\mu(x) d \pazocal{S}_{\mu}(\mathbb{P},\mathbb{Q})} }_{\mathbb{R}^d} \mu(x)dx\\
&=& \pazocal{S}_{\mu}(\mathbb{P},\mathbb{Q}) \int_{\pazocal{X}} \scalT{\nabla_x f(x)}{\nabla_xf^*(x)} \mu(x) dx\\
&=& \pazocal{S}_{\mu}(\mathbb{P},\mathbb{Q}) \scalT{f}{f^*}_{W^{1,2}_0}
\end{eqnarray*}
Hence we have:
$$\sup_{f \in \mathcal{H}, \nor{f}_{W^{1,2}_0}\leq 1} \mathbb{E}_{x\sim \mathbb{P}}f(x)-\mathbb{E}_{x\sim \mathbb{Q}}f(x)= \pazocal{S}_{\mu}(\mathbb{P},\mathbb{Q}) \sup_{f \in \mathcal{H}, \nor{f}_{W^{1,2}_0}\leq 1}\scalT{f}{f^*}_{W^{1,2}_0}, $$
\noindent It follows therefore that:
$$\pazocal{S}_{\mathcal{H}}(\mathbb{P},\mathbb{Q})= \pazocal{S}_{\mu}(\mathbb{P},\mathbb{Q}) \sup_{f \in \mathcal{H}, \nor{f}_{W^{1,2}_0}\leq 1}\scalT{f}{f^*}_{W^{1,2}_0}$$
\end{proof}
We conclude that the Sobolev IPM can be approximated in arbitrary space as long as it has enough capacity to approximate the optimal critic. Interestingly the approximation error is measured now with the Sobolev semi-norm, while in Fisher it was measured with the Lebesgue norm.
Approximations with Sobolev Semi-norms are stronger then Lebesgue norms as given by the Poincare inequality ($||f||_{\mathcal{L}_2}\leq C \nor{f}_{W^{1,2}_0}$), meaning if the error goes to zero in Sobolev sense it also goes to zero in the Lebesgue sense , but the converse is not true.

\begin{proof}[Proof of Theorem \ref{theo:pde}]

The proof follows similar arguments in the proofs of the analysis of Laplacian regularization in semi-supervised learning studied by 
\citep{AlaouiCRWJ16}.

\begin{eqnarray} \label{eq:sobolev_space_ipm}
\pazocal{S}_{\mu}(\mathbb{P},\mathbb{Q})~& = \sup_{f \in W^{1,2}_0} \Big\{ \mathbb{E}_{x \sim \mathbb{P}}~[f(x)] - \mathbb{E}_{x \sim \mathbb{Q}}~[f(x)]\Big\} \notag \\
s.t. ~~& \mathbb{E}_{x \sim \mu } \| \nabla f(x) \|_2^2 \leq 1,
\end{eqnarray}

Note that this problem is convex in $f$ \citep{convexbook}. Writing the lagrangian for equation~\eqref{eq:sobolev_space_ipm} we get :
\begin{align*}
L(f,\lambda) &= \mathbb{E}_{x \sim \mathbb{P}}~[f(x)] - \mathbb{E}_{x \sim \mathbb{Q}}~[f(x)] +\frac{\lambda}{2} \Big( 1 - \mathbb{E}_{x \sim \mu} \| \nabla_x f(x) \|_2^2 \Big) \\
&= \int_{\pazocal{X}}f(x)~\big( \mathbb{P}(x) - \mathbb{Q}(x) \big)~dx + \frac{\lambda}{2} \Big( 1 - \int_{\pazocal{X}} \| \nabla_x f(x) \|_2^2 \mu(x)dx \Big) \\
&= \int_{\pazocal{X}}f(x)~\mu_1(x) ~dx + \frac{\lambda}{2} \Big ( 1 - \int_{\pazocal{X}} \| \nabla_x f(x) \|_2^2 ~~\mu(x)~dx \Big)
\end{align*}
We denote $\big( \mathbb{P}(x) - \mathbb{Q}(x) \big)$ as $\mu_1(x)$.To get the optimal $f$, we need to apply KKT conditions on the above equation. 
$$L(f,\lambda) = \int_{\pazocal{X}}f(x)~\mu_1(x) ~dx + \frac{\lambda}{2} \Big ( 1 - \int_{\pazocal{X}} \| \nabla_x f(x) \|_2^2 ~~\mu(x)~dx \Big)$$
From the calculus of variations:
\begin{align*}
L(f + \epsilon h,\lambda ) &= \int_{\pazocal{X}}(f+\epsilon h) (x)~\mu_1(x) ~dx + \frac{\lambda}{2} \Big ( 1 - \int_{\pazocal{X}} \| \nabla_x \big(f+\epsilon h\big) (x) \|_2^2 ~~\mu(x)~dx \Big) \\
&= \int_{\pazocal{X}}(f(x)+\epsilon h(x)) ~\mu_1(x) ~dx + \frac{\lambda}{2} \Big ( 1 - \int_{\pazocal{X}} \big \langle \nabla_x \big(f+\epsilon h\big) (x),\nabla_x \big(f+\epsilon h\big) (x) \big \rangle ~~\mu(x)~dx \Big) \\
&= \int_{\pazocal{X}}(f(x)+\epsilon h(x)) ~\mu_1(x) ~dx\\
& + \frac{\lambda}{2} \Big ( 1 - \int_{\pazocal{X}}\big[ \| \nabla_x f(x) \|_2^2 + 2\epsilon \langle \nabla_x f(x), \nabla_x h(x) \rangle +\pazocal{O}(\epsilon^2)\big ] ~ \mu(x) dx \Big) \\
&= L(f,\lambda) + \epsilon \int_{\pazocal{X}}h(x) ~\mu_1(x) ~dx -\lambda\epsilon \int_{\pazocal{X}} \langle \nabla_x f(x), \nabla_x h(x) \rangle ~ \mu(x)~dx + \pazocal{O}(\epsilon^2) \\
&= L(f,\lambda) + \epsilon \Big [ \int_{\pazocal{X}}h(x) ~\mu_1(x) ~dx -{\lambda} \int_{\pazocal{X}} \langle \nabla_x f(x), \nabla_x h(x) \rangle ~ \mu(x)~dx \Big] + \pazocal{O}(\epsilon^2)
\end{align*}

\noindent Now we apply integration by part and set $h$ to be zero at boundary as in \citep{AlaouiCRWJ16}. We get :

\begin{align*}
\int_{\pazocal{X}} \langle \nabla_x f(x), \nabla_x h(x) \rangle ~ \mu(x)~dx &= \int_{\pazocal{X}} \langle \nabla_x f(x)~ \mu(x) , \nabla_x h(x) \rangle ~ dx \\
&= \big \langle h(x), \nabla_x f(x)~ \mu(x) \big \rangle \Big|_{\partial \pazocal{X}} - \int_{\pazocal{X}}div \big( \mu(x) \nabla_x f(x) \big) ~h(x)~dx \\
& = - \int_{\pazocal{X}}div \big( \mu(x) \nabla_x f(x) \big) ~h(x)~dx 
\end{align*}

\noindent Hence, 
\begin{align*}
L(f + \epsilon h,\lambda ) & = L(f,\lambda) + \epsilon \Big [ \int_{\pazocal{X}}\mu_1(x) ~h(x) ~dx + {\lambda} \int_{\pazocal{X}}div \big( \mu(x) \nabla_x f(x) \big) ~h(x)~dx \Big ] + \pazocal{O}(\epsilon^2) \\
&= L(f,\lambda) + \epsilon \int_{\pazocal{X}} \Big( \mu_1(x) + {\lambda} ~~ div \big( \mu(x) \nabla_x f(x) \big) \Big) ~ h(x)~dx ~ + \pazocal{O}(\epsilon^2) 
\end{align*}

\noindent The functional derivative of $L(f,\lambda)$, at any test function $h$ vanishing on the boundary:
\begin{eqnarray*}
\int_{\pazocal{X}}\frac{\partial L(f,\lambda)}{\partial f}(x) h(x) dx&=& \lim_{\epsilon \to 0}\frac{ L(f + \epsilon h,\lambda )-L(f,\lambda)}{\epsilon}\\
 &=&\int_{\pazocal{X}} \Big( \mu_1(x) + {\lambda} ~~ div \big( \mu(x) \nabla_x f(x) \big) \Big) ~ h(x)~dx 
 \end{eqnarray*}
Hence we have:
$$\frac{\partial L(f,\lambda)}{\partial f}(x)= \mu_1(x) + {\lambda} ~~ div \big( \mu(x) \nabla_x f(x) \big) $$ 
For the optimal $f^*,\lambda^*$ first order optimality condition gives us: 
\begin{equation}
 \mu_1(x) + {\lambda}^* ~~ div \big( \mu(x) \nabla_x f^*(x) \big) = 0 \label{eq:pde_part_1}
\end{equation}
and 
\begin{eqnarray}
\int_{\pazocal{X}} \nor{\nabla_x f^*(x)}^2\mu(x)dx=1
\end{eqnarray}
Note that (See for example \citep{AlaouiCRWJ16}) :
$$ div \big( \mu(x) \nabla_x f^*(x)\big) = \mu(x) \Delta_2 f^*(x) + \langle \nabla_x \mu(x), \nabla_x f^*(x) \rangle ,$$
since $div(\nabla_xf^*(x))=\Delta_2f^*(x)$.
Hence from equation~\eqref{eq:pde_part_1}
\begin{align}
 &~\mu_1(x) + {\lambda^*} ~ div ~\big( \mu(x) \nabla_x f^*(x) \big) = 0 \notag \\
 \Rightarrow &~ \mu_1(x) + {\lambda^*} ~\big( \mu(x) \Delta_2 f^*(x) + \langle \nabla_x \mu(x), \nabla_x f^*(x) \rangle \big) = 0 \notag \\
 \Rightarrow &~ \mu_1(x) + {\lambda^*}~ \mu(x) \Delta_2 f^*(x) + {\lambda^*} \langle \nabla_x \mu(x), \nabla_x f^*(x) \rangle =0 \notag \\
 \Rightarrow & ~ \Delta_2 f^*(x) + \scalT{\frac{\nabla_x \mu (x)}{\mu (x)}}{ \nabla_x f^*(x) } + \frac{ \mu_1(x)}{\lambda^* \mu (x)} =0 \notag \\
 \Rightarrow & ~ \Delta_2 f^*(x) + \scalT{\nabla_x \log \mu (x)}{ \nabla_x f^*(x)}+ \frac{ \mathbb{P}(x)-\mathbb{Q}(x)}{\lambda^* \mu (x)} =0 \notag \\
 \end{align}
 
\noindent Hence $f^*,\lambda^*$ satisfies :
\begin{equation}
 \Delta_2 f^*(x) + \scalT{\nabla_x \log \mu (x)}{ \nabla_x f^*(x)}+ \frac{ \mathbb{P}(x)-\mathbb{Q}(x)}{\lambda^* \mu (x)} =0
 \label{eq:PDE}
 \end{equation}
 and
\begin{equation}
 \int_{\pazocal{X}} \nor{\nabla_x f^*(x)}^2\mu(x)dx=1.
 \label{eq:satisfConstraint}
\end{equation}
Let us verify that the optimal critic as found in the geometric definition (Theorem \ref{theo:Sobolev}) of Sobolev IPM that satisfies:
\begin{align}
\nabla_i f^*(x)=\frac{\partial f^*(X)}{\partial x_i} = \frac{D^{-i}F_{\mathbb{Q}}(x) - D^{-i}F_{\mathbb{P}}(x) }{\lambda^* d ~~\mu(x)} ~~~~\forall~ i \in [d] \label{eq:solution_pde_dim_n_p1},
\end{align}
satisfies indeed the PDE.

From equation~\eqref{eq:solution_pde_dim_n_p1}, we want to compute $\frac{\partial^2 f(x)}{\partial x_i^2}$ for all $i$:
\begin{align}
\frac{\partial^2 f(x)}{\partial x_i^2} &=\frac{1}{\lambda^* d} \Bigg[ \frac{\mu(x) \big[\frac{\partial}{\partial x_i}(D^{-i}F_{\mathbb{Q}}(x) - D^{-i}F_{\mathbb{P}}(x)) \big] - \big[ D^{-i}F_{\mathbb{Q}}(x) - D^{-i}F_{\mathbb{P}}(x) \big] \nabla_i \mu(X) }{\mu^2(x)} \Bigg] \notag \\
&=\frac{1}{\lambda^* d} \Bigg[ \frac{\mu(x) \big[ \mathbb{Q}(x) - \mathbb{P}(x) \big] - \big[ D^{-i}F_{\mathbb{Q}}(x) - D^{-i}F_{\mathbb{P}}(x) \big] \nabla_i \mu(X) }{\mu^2(x)} \Bigg] \notag \\
&= \frac{ \mathbb{Q}(x) - \mathbb{P}(x) }{\lambda^* d ~\mu(x)} - \frac{ \nabla_i \mu(x)}{\mu(x)} \nabla_i f^*(x) \notag
\end{align}
Hence,
\begin{align}
\frac{\partial^2 f(x)}{\partial x_i^2} + \frac{ \nabla_i \mu(x)}{\mu(x)} \nabla_i f(x) + \frac{\big( \mathbb{P}(x) - \mathbb{Q}(x) \big)}{\lambda^* d ~~ \mu(x) } = 0 \label{eq:solution_pde_dim_n_p2}
\end{align}

Adding equation~\eqref{eq:solution_pde_dim_n_p2} for all $i \in [d]$, we get :
\begin{align*}
 \sum_{i = 1}^d \Bigg(\frac{\partial ^2 f(x)}{\partial x_i^2} + \frac{ \nabla_i \mu(x)}{\mu(x)} \nabla_i f(x) + \frac{\big( \mathbb{P}(x) - \mathbb{Q}(x) \big)}{\lambda^* d~~ \mu(x) } \Bigg)=0
\end{align*}

As a result, the solution $f^*$ of the partial differential equation given in equation~\eqref{eq:PDE} satisfies the following :
$$\frac{\partial f^*(x)}{\partial x_i} = \frac{D^{-i}F_{\mathbb{Q}}(x) - D^{-i}F_{\mathbb{P}}(x) }{\lambda^* d ~~\mu(x)} ~~~~\forall~ i \in [d] $$

Using the constraint in \eqref{eq:satisfConstraint} we can get the value of $\lambda^*$ :
\begin{align*} 
& \int \| \nabla f^*(x) \|^2~ \mu(x)~dx =1 \\
\Rightarrow & \int \sum_{i =1}^d \Big( \frac{\partial f^*(x)}{\partial x_i} \Big)^2 ~ \mu(x)~dx =1 \\
\Rightarrow & \lambda^* = \frac{1}{d} \sqrt{ \sum_{i=1}^d \int \frac{\big(D^{-i}F_{\mathbb{Q}}(x) - D^{-i}F_{\mathbb{P}}(x)\big)^2}{\mu(x)} ~dx }=\pazocal{S}_{\mu}(\mathbb{P},\mathbb{Q}).
\end{align*}

\end{proof}

\begin{proof}[Proof of Corollary \ref{corr:Stein}]
Define the Stein operator \citep{LiuLJ16,Stein}: 
\begin{eqnarray*}
T(\mu)[\nabla_xf(x)]&=&\frac{1}{2} \scalT{\nabla_x f(x)}{\nabla_x \log\mu(x)}+\frac{1}{2}\scalT{\nabla_x}{\nabla_x f(x)}\\
&=&\frac{1}{2} \scalT{\nabla_x f(x)}{\nabla_x \log\mu(x)}+\frac{1}{2} \Delta_2f(x).
\end{eqnarray*}
Recall that Barbour generator theory provides us a way of constructing such operators that produce mean zero function under $\mu$. It is easy to verify that: 
$$\mathbb{E}_{x\sim \mu} T(\mu)\nabla_xf(x)=0.$$

Recall that this operator arises from the overdamped Langevin diffusion, defined by the stochastic differential equation:
$$ dx_t = \frac{1}{2}\nabla_x \log \mu (x_t)+dW_t$$
 where $(W_t)_{t\geq 0}$ is a Wiener process. This is related to plug and play networks for generating samples if the distribution is known, using the stochastic differential equation.\\

\noindent From Theorem \ref{theo:pde}, it is easy to see that the PDE the Sobolev Critic $(f^*,\lambda^*=\pazocal{S}_{\mu}(\mathbb{P},\mathbb{Q}))$ can be written in term of Stein Operator as follows:
$$T(\mu)[\nabla_xf^*](x)= \frac{1}{2\lambda^*} \frac{\mathbb{Q}(x)-\mathbb{P}(x)}{\mu(x)}$$
Taking absolute values and the expectation with respect to $\mathbb{Q}$:
$$ \left|\mathbb{E}_{x\sim \mathbb{Q}}\left[T(\mu)\nabla_xf^*(x)\right]\right|= \frac{1}{2\pazocal{S}_{\mu}(\mathbb{P},\mathbb{Q})} \left| \mathbb{E}_{x\sim \mathbb{Q}} \left[\frac{\mathbb{Q}(x)-\mathbb{P}(x)}{\mu(x)}\right]\right|$$

\noindent Recall that the definition of Stein Discrepancy :
$$\mathbb{S}(\mathbb{Q},\mu)=\sup_{\vec{g}} \left| \mathbb{E}_{x\sim \mathbb{Q}}\left[T(\mu)\vec{g}(x)\right]\right|$$

\noindent It follows that Sobolev IPM critic satisfies:
$$\left| \mathbb{E}_{x\sim \mathbb{Q}}\left[T(\mu)\nabla_xf^*(x)\right]\right|\leq \mathbb{S}(\mathbb{Q},\mu),$$
Hence we have the following inequality:

$$ \frac{1}{2\pazocal{S}_{\mu}(\mathbb{P},\mathbb{Q})} \left| \mathbb{E}_{x\sim \mathbb{Q}} \left[\frac{\mathbb{Q}(x)-\mathbb{P}(x)}{\mu(x)}\right]\right| \leq \mathbb{S}(\mathbb{Q},\mu)$$
This is equivalent to:
$$ \left| \mathbb{E}_{x\sim \mathbb{Q}} \left[\frac{\mathbb{Q}(x)-\mathbb{P}(x)}{\mu(x)}\right]\right| \leq 2 \underbrace{ \mathbb{S}(\mathbb{Q},\mu)}_{ \text{Stein Good fitness of the model } \mathbb{Q} \text{ w.r.t to } \mu} \underbrace{ \pazocal{S}_{\mu}(\mathbb{P},\mathbb{Q})}_{~~~\text{Sobolev Distance}}$$
Similarly we obtain:
$$ \left| \mathbb{E}_{x\sim \mathbb{P}} \left[\frac{\mathbb{Q}(x)-\mathbb{P}(x)}{\mu(x)}\right]\right| \leq 2 \underbrace{ \mathbb{S}(\mathbb{P},\mu)}_{ \text{Stein Good fitness of } \mathbb{\mu} \text{ w.r.t to } \mathbb{P}} \underbrace{ \pazocal{S}_{\mu}(\mathbb{P},\mathbb{Q})}_{~~~\text{Sobolev Distance}}$$

\noindent For instance consider $\mu=\mathbb{P}$, we have therefore:
$$\frac{1}{2}\left|\mathbb{E}_{x\sim \mathbb{Q}} \left[\frac{\mathbb{Q}(x)}{\mathbb{P}(x)}\right] - 1\right| \leq \mathbb{S}(\mathbb{Q},\mathbb{P})\pazocal{S}_{\mathbb{P}}(\mathbb{P},\mathbb{Q}). $$

\emph{Note that the left hand side of the inequality is not the total variation distance.}

\noindent Hence for a sequence $\mathbb{Q}_n$ if the Sobolev distance goes $\pazocal{S}_{\mathbb{P}}(\mathbb{P},\mathbb{Q}_n) \to 0$, the ratio $r_n(x)=\frac{\mathbb{Q}_n(x)}{\mathbb{P}(x)}$ converges in expectation (w.r.t to $\mathbb{Q}$)
to $1$. The speed of the convergence is given by the Stein Discrepancy $\mathbb{S}(\mathbb{Q}_n,\mathbb{P})$. 

\emph{One important observation here is that convergence of PDF ratio is weaker than the conditional CDF as given by the Sobolev distance and of the good fitness of score function as given by Stein discrepancy. }

\end{proof}
\section{Text experiments: Additional Plots}\label{app:text}
\noindent\textbf{Comparison of annealed versus non annealed smoothing of $\mathbb{P}_r$ in Sobolev GAN.} 
\begin{figure}[H]
\centering
\includegraphics[width=0.4\textwidth]{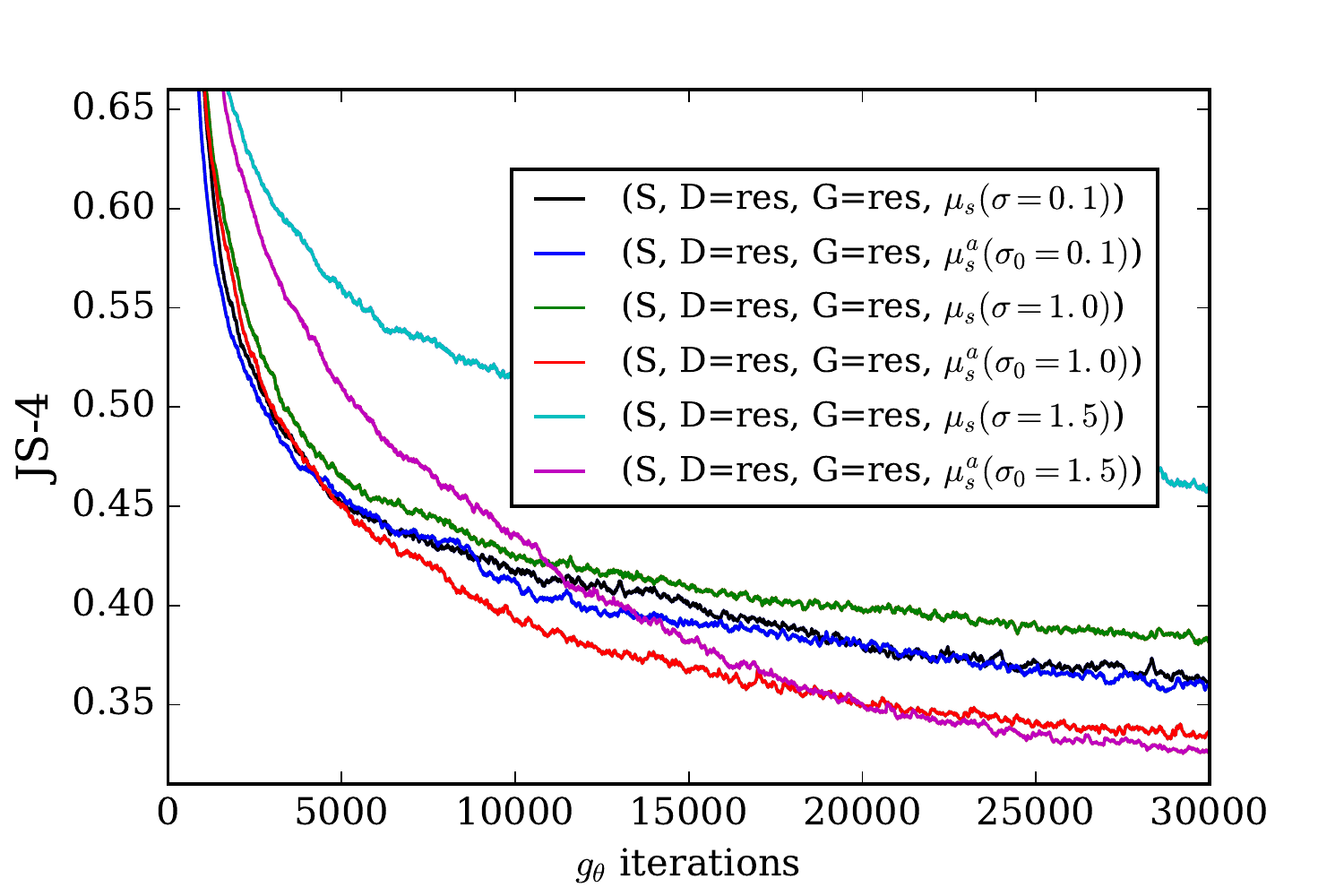}
\caption{Comparison of annealed versus non annealed smoothing of $\mathbb{P}_r$ in Sobolev GAN.
We see that annealed smoothing outperforms the non annealed smoothing experiments.
}
\label{fig:annealing}
\end{figure}

\noindent\textbf{Sobolev GAN versus WGAN-GP with RNN. } We fix the generator architecture to Resnets. The experiments of using RNN (GRU) as the critic architecture for WGAN-GP and Sobolev is shown in Figure~\ref{fig:rnn} where we used $\mu=\mu_{GP}$ for both cases.
We only apply gradient clipping to stabilize the performance without other tricks. We can observe that using RNN degrades the performance. We think that this is due to an optimization issue and a difficulty in training RNN under the GAN objective without any pre-training or conditioning. 
\begin{figure}[ht!]
\centering
\subfigure[WGAN-GP]
{\includegraphics[width=0.4\textwidth]{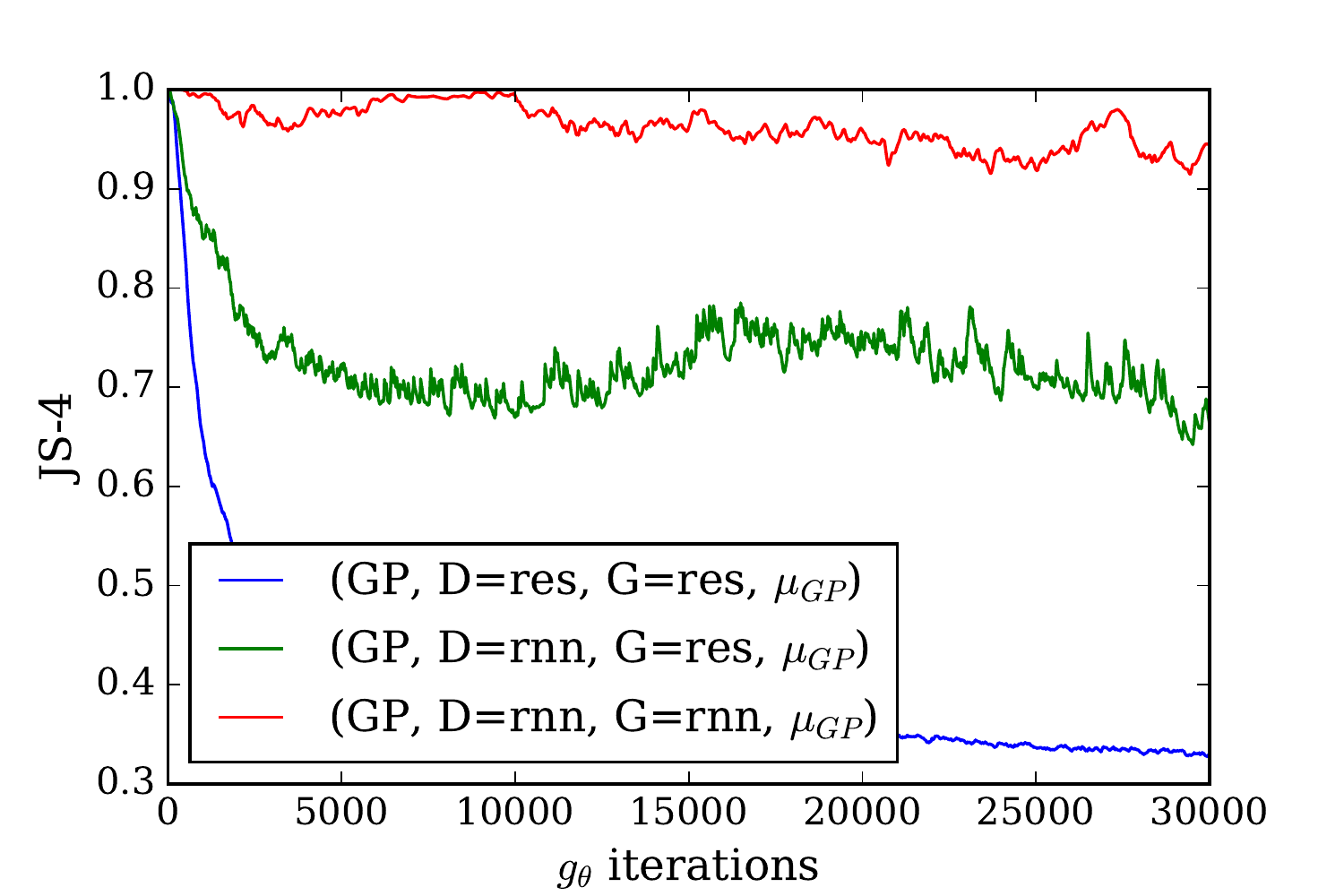}}
\subfigure[Sobolev]
{\includegraphics[width=0.4\textwidth]{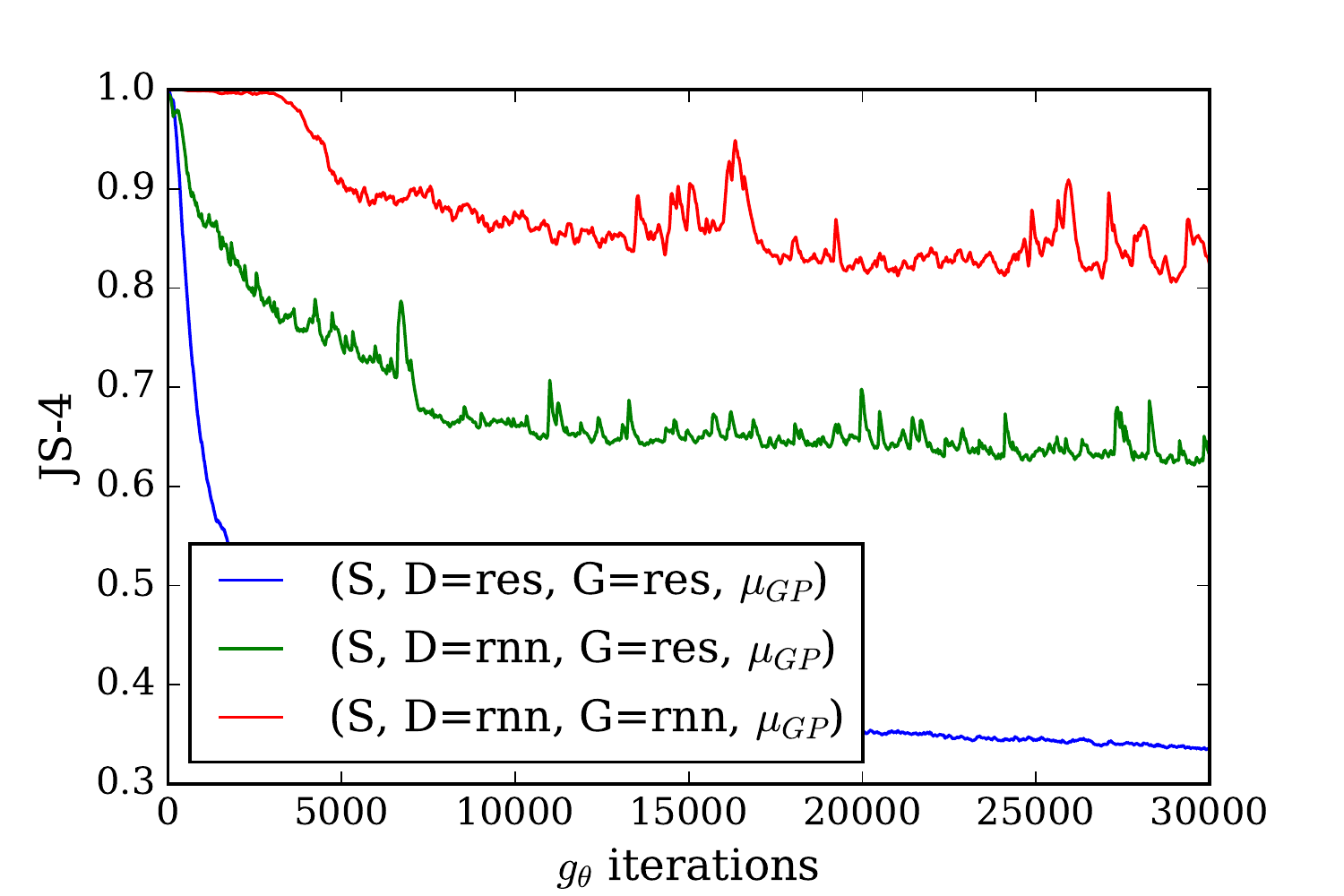}}
\caption{Result of WGAN-GP and Sobolev with RNNs.}
\label{fig:rnn}
\end{figure}

\begin{figure}[H]
\centering
\includegraphics[width=0.4\textwidth]{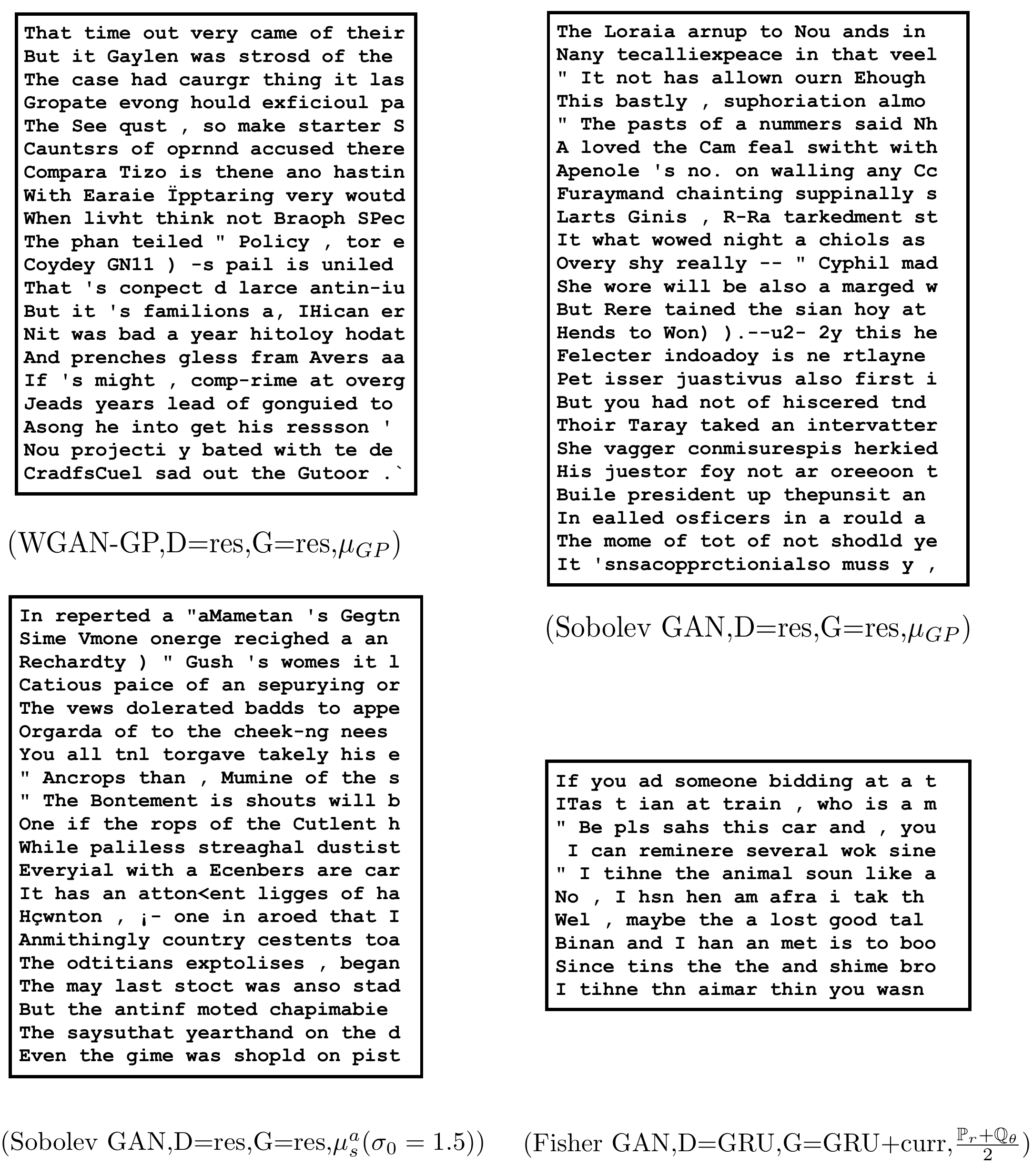}
\caption{Text samples from various  GANs considered in this paper.}
\label{fig:samples}
\end{figure}

\section{SSL: hyperparameters and architecture}\label{app:ssl}
For our SSL experiments on CIFAR-10, we use Adam with learning rate $\eta=2\mathrm{e}{-4}$, $\beta_1=0.5$ and $\beta_2=0.999$, both for critic $f$ (without BN) and Generator (with BN).
We selected $\lambda_{CE}=1.5$ from $[0.8, 1.5, 3.0, 5.0]$.
We train all models for 350 epochs.
We used some L2 weight decay: $1\mathrm{e}{-6}$ on $\omega, S$ (i.e. all layers except last) and $1\mathrm{e}{-3}$ weight decay on the last layer $v$.
For formulation 1 (Fisher only) we have $\rho_F=1\mathrm{e}{-7}$, modified critic learning rate $\eta_D=1\mathrm{e}{-4}$, critic iters $n_c=2$.
For formulation 2 (Sobolev + Fisher) we have $\rho_F=5\mathrm{e}{-8}$, $\rho_S=2\mathrm{e}{-8}$, critic iters $n_c=1$.

Architecture:
\begin{lstlisting}
### CIFAR-10: 32x32. G is dcgan with G_extra_layers=2.
### D is in the flavor of OpenAI Improved GAN, ALI. 
G (
  (main): Sequential (
    (0): ConvTranspose2d(100, 256, kernel_size=(4, 4), stride=(1, 1), bias=False)
    (1): BatchNorm2d(256, eps=1e-05, momentum=0.1, affine=True)
    (2): ReLU (inplace)
    (3): ConvTranspose2d(256, 128, kernel_size=(4, 4), stride=(2, 2), padding=(1, 1), bias=False)
    (4): BatchNorm2d(128, eps=1e-05, momentum=0.1, affine=True)
    (5): ReLU (inplace)
    (6): ConvTranspose2d(128, 64, kernel_size=(4, 4), stride=(2, 2), padding=(1, 1), bias=False)
    (7): BatchNorm2d(64, eps=1e-05, momentum=0.1, affine=True)
    (8): ReLU (inplace)
    (9): Conv2d(64, 64, kernel_size=(3, 3), stride=(1, 1), padding=(1, 1), bias=False)
    (10): BatchNorm2d(64, eps=1e-05, momentum=0.1, affine=True)
    (11): ReLU (inplace)
    (12): Conv2d(64, 64, kernel_size=(3, 3), stride=(1, 1), padding=(1, 1), bias=False)
    (13): BatchNorm2d(64, eps=1e-05, momentum=0.1, affine=True)
    (14): ReLU (inplace)
    (15): ConvTranspose2d(64, 3, kernel_size=(4, 4), stride=(2, 2), padding=(1, 1), bias=False)
    (16): Tanh ()
  )
)
D (
  (main): Sequential (
    (0): Dropout (p = 0.2)
    (1): Conv2d(3, 96, kernel_size=(3, 3), stride=(1, 1), padding=(1, 1))
    (2): LeakyReLU (0.2, inplace)
    (3): Conv2d(96, 96, kernel_size=(3, 3), stride=(1, 1), padding=(1, 1), bias=False)
    (5): LeakyReLU (0.2, inplace)
    (6): Conv2d(96, 96, kernel_size=(3, 3), stride=(2, 2), padding=(1, 1), bias=False)
    (8): LeakyReLU (0.2, inplace)
    (9): Dropout (p = 0.5)
    (10): Conv2d(96, 192, kernel_size=(3, 3), stride=(1, 1), padding=(1, 1), bias=False)
    (12): LeakyReLU (0.2, inplace)
    (13): Conv2d(192, 192, kernel_size=(3, 3), stride=(1, 1), padding=(1, 1), bias=False)
    (15): LeakyReLU (0.2, inplace)
    (16): Conv2d(192, 192, kernel_size=(3, 3), stride=(2, 2), padding=(1, 1), bias=False)
    (18): LeakyReLU (0.2, inplace)
    (19): Dropout (p = 0.5)
    (20): Conv2d(192, 384, kernel_size=(3, 3), stride=(1, 1), bias=False)
    (22): LeakyReLU (0.2, inplace)
    (23): Dropout (p = 0.5)
    (24): Conv2d(384, 384, kernel_size=(3, 3), stride=(1, 1), bias=False)
    (26): LeakyReLU (0.2, inplace)
    (27): Dropout (p = 0.5)
    (28): Conv2d(384, 384, kernel_size=(1, 1), stride=(1, 1), bias=False)
    (30): LeakyReLU (0.2, inplace)
    (31): Dropout (p = 0.5)
  )
  (V): Linear (6144 -> 1)
  (S): Linear (6144 -> 10)
)
\end{lstlisting}

\end{document}